\documentclass{article}
\usepackage{microtype}
\usepackage{graphicx}
\usepackage{subcaption}
\usepackage{booktabs} 

\usepackage{hyperref}

\usepackage[preprint]{icml2026}

\usepackage{amsmath}
\usepackage{amssymb}
\usepackage{mathtools}
\usepackage{amsthm}
\usepackage[shortlabels]{enumitem}

\usepackage{color}

\usepackage[capitalize,noabbrev]{cleveref}

\usepackage[framemethod=TikZ]{mdframed}
\mdfdefinestyle{graybox}{%
    linecolor=white,
    outerlinewidth=0pt,
    roundcorner=3pt,
    innertopmargin=8pt,
    innerbottommargin=8pt,
    innerrightmargin=10pt,
    innerleftmargin=10pt,
    backgroundcolor=black!5!white}

\theoremstyle{plain}
\newtheorem{theoreminner}{Theorem}[section]
\newtheorem{propositioninner}[theoreminner]{Proposition}
\newtheorem{lemmainner}[theoreminner]{Lemma}
\newtheorem{corollaryinner}[theoreminner]{Corollary}

\newenvironment{theorem}[1][]
  {\begin{mdframed}[style=graybox]\begin{theoreminner}[#1]}
  {\end{theoreminner}\end{mdframed}}

\newenvironment{lemma}[1][]
  {\begin{mdframed}[style=graybox]\begin{lemmainner}[#1]}
  {\end{lemmainner}\end{mdframed}}

\theoremstyle{definition}
\newtheorem{definitioninner}[theoreminner]{Definition}
\newtheorem{assumptioninner}[theoreminner]{Assumption}
\newenvironment{definition}[1][]
  {\begin{mdframed}[style=graybox]\begin{definitioninner}[#1]}
  {\end{definitioninner}\end{mdframed}}
\newenvironment{assumption}[1][]
  {\begin{mdframed}[style=graybox]\begin{assumptioninner}[#1]}
  {\end{assumptioninner}\end{mdframed}}

\theoremstyle{remark}
\newtheorem{remark}[theoreminner]{Remark}

\icmltitlerunning{Stochastic Gradient Lattice Random Walk}

\begin{document}

\twocolumn[
  \icmltitle{Robust Stochastic Gradient Posterior Sampling with Lattice Based Discretisation}
  \icmlsetsymbol{equal}{*}

  \begin{icmlauthorlist}
    \icmlauthor{Zier Mensch}{UVAPhys,NTU}
    \icmlauthor{Lars Holdijk}{Ox,NC}
    \icmlauthor{Samuel Duffield}{NC} \\
    \icmlauthor{Maxwell Aifer}{NC}
    \icmlauthor{Patrick J. Coles}{NC}
    \icmlauthor{Max Welling}{AMLab}
    \icmlauthor{Miranda Cheng}{UVAPhys,Math,Kort}
  \end{icmlauthorlist}

  \icmlaffiliation{UVAPhys}{Institute of Physics, University of Amsterdam, Netherlands}
  \icmlaffiliation{Math}{Institute for Mathematics, Academia Sinica, Taiwan}
  \icmlaffiliation{Kort}{Korteweg-de Vries Institute for Mathematics, University of Amsterdam, Netherlands}
  \icmlaffiliation{Ox}{Department of Computer Science, University of Oxford, United Kingdom}
  \icmlaffiliation{NC}{Normal Computing Corporation, New York, New York, USA}
  \icmlaffiliation{AMLab}{Amsterdam Machine Learning Lab, University of Amsterdam, Netherlands}
  \icmlaffiliation{NTU}{Department of Physics, National Taiwan University, Taiwan}

  \icmlcorrespondingauthor{Zier Mensch}{ziermensch@gmail.com}
  % \icmlcorrespondingauthor{Lars Holdijk}{larsholdijk@gmail.com}

  \icmlkeywords{Machine Learning, ICML}

  \vskip 0.3in
]

% this must go after the closing bracket ] following \twocolumn[ ...

% This command actually creates the footnote in the first column listing the
% affiliations and the copyright notice. The command takes one argument, which
% is text to display at the start of the footnote. The \icmlEqualContribution
% command is standard text for equal contribution. Remove it (just {}) if you
% do not need this facility.

% Use ONE of the following lines. DO NOT remove the command.
% If you have no special notice, KEEP empty braces:
\printAffiliationsAndNotice{}  % no special notice (required even if empty)
% Or, if applicable, use the standard equal contribution text:
% \printAffiliationsAndNotice{\icmlEqualContribution}

\begin{abstract}
Stochastic-gradient MCMC methods enable scalable Bayesian posterior sampling but often suffer from sensitivity to minibatch size and gradient noise. To address this, we propose Stochastic Gradient Lattice Random Walk (SGLRW), an extension of the Lattice Random Walk discretization. Unlike conventional Stochastic Gradient Langevin Dynamics (SGLD), SGLRW introduces stochastic noise only through the off-diagonal elements of the update covariance; this yields greater robustness to minibatch size while retaining asymptotic correctness. Furthermore, as comparison we analyze a natural analogue of SGLD utilizing gradient clipping. Experimental validation on Bayesian regression and classification demonstrates that SGLRW remains stable in regimes where SGLD fails, including in the presence of heavy-tailed gradient noise, and matches or improves predictive performance.
\end{abstract}
\section{Introduction}
Bayesian methods provide a principled framework for learning probabilistic models from data and natively capturing uncertainty by replacing the parameter point estimates in frequentist methods with a posterior distribution over parameters. By marginalizing over parameters, Bayesian methods act as a form of regularization and enable uncertainty quantification and robust model selection \citep{neal2012bayesian}. In doing so, Bayesian models can potentially mitigate overfitting and miscalibration, which are prevalent in modern large-scale, overparameterized neural networks \citep{guo2017calibration, yang2023bayesian}. Realizing these benefits in the modern hyperscaling era, however, requires posterior inference algorithms that scale to both dataset size and model complexity.

Within Bayesian methods, Markov chain Monte Carlo (MCMC) \citep{neal1993probabilistic, robert1999monte} remains the gold standard for posterior sampling, but it is also among the methods most affected by scalability and computational cost \citep{gelman1997weak}. Alternative approaches, including variational inference \citep{blei2017variational}, Laplace approximations \citep{tierney1986accurate}, and single-pass methods \citep{gal2016dropout}, are often less computationally demanding, but still introduce substantial overhead in training and inference \citep{blei2017variational, lakshminarayanan2017simple, wilson2020bayesian}. As a result, these methods have, in some settings, fallen out of favour relative to modern approaches for assessing model trustworthiness, such as explainable and interpretable models \citep{li2023trustworthy}.

\begin{figure}[t]
    \begin{center}
      \centerline{\includegraphics[width=0.85\columnwidth]{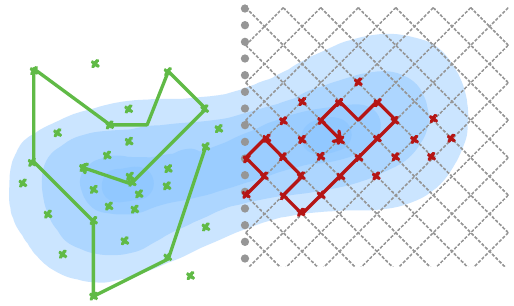}}
      \caption{
        Comparing SGLD \textbf{(left)} and SGLRW \textbf{(right)} discretisations of Langevin dynamics, we can observe that the lattice based discretisation suppresses large parameter jumps that occur due to minibatch noise, resulting in more stable sampling.
      }
      \label{fig:sketch}
    \end{center}
    \vspace{-0.5in}
  \end{figure}

One core issue of MCMC methods for Bayesian posterior inference is the theoretical requirement to evaluate the gradient of the posterior over the entire dataset at each iteration \citep{welling2011bayesian, ma2015complete}. With growing model complexity and dataset size, this is often prohibitively expensive. Stochastic-gradient variants of MCMC methods, such as Stochastic Gradient Langevin Dynamics (SGLD) \citep{welling2011bayesian}, alleviate this concern to some extent by allowing the gradient to be evaluated on a small minibatch of data at each iteration. However, these methods are still known to be sensitive to the minibatch size \citep{brosse2018promises}. As a result, they do not scale to regimes where only a small minibatch is available at each evaluation step, or where only a small number of samples from the dataset can be stored in memory, as is becoming increasingly common.

In this work, we propose Stochastic Gradient Lattice Random Walk (SGLRW), a stochastic-gradient extension of the recently introduced Lattice Random Walk (LRW) \citep{duffield2025lattice} discretisation of overdamped Langevin dynamics. LRW replaces the Gaussian increments of Langevin dynamics with bounded binary or ternary updates on a lattice. As we show, unlike SGLD, the stochastic gradient noise in SGLRW enters only through the off-diagonal elements of the covariance matrix of the update and therefore remains robust to the minibatch size. This allows SGLRW to sample from the posterior distribution with the same asymptotic correctness as SGLD, but with improved stability for small minibatches, as shown in Figure \ref{fig:sketch}.

In short our contributions are as follows:\vspace{-0.3cm}
\begin{itemize}[leftmargin=13pt, labelsep=.6em]
  \setlength\itemsep{0em}
    \item We propose Stochastic Gradient Lattice Random Walk (SGLRW), a lattice based stochastic-gradient discretisation of overdamped Langevin dynamics (Section~\ref{subsec:sglrw}).
    \item Extending the analysis of \citet{chen2015convergence}, we provide a mean-squared-error analysis that justifies its improved stability for small minibatches (Section~\ref{sec:analysis}).
    \item We validate our theoretical findings on a mix of analytically understood problems and real-world tasks, including sentiment classification using an LLM (Section~\ref{sec:experiments}).
    \item We discuss a clipped version of SGLD as a strong baseline that is analogous to gradient clipping in stochastic gradient descent (Section~\ref{subsec:Clipped_SGLD}).
 
\end{itemize}

\section{Background}
As stated in the introduction, we consider the problem of minibatch-induced instability in stochastic gradient MCMC methods for Bayesian posterior sampling. Here, we recap the necessary background on Bayesian machine learning, posterior inference, and stochastic gradient methods.

\paragraph{Bayesian Machine Learning}
We consider the supervised learning setting where we have observed data $\mathcal{D} = \{(x_i, y_i)\}_{i=1}^N$ and aim to infer a posterior distribution $p(\theta \mid \mathcal{D})$ over the parameter vector $\theta \in \mathbb{R}^d$. In contrast to frequentist approaches, which seek a single point estimate $\theta^*$ that maximises the likelihood $p(\mathcal{D} \mid \theta)$, Bayesian machine learning maintains a full distribution over parameters, $p(\theta \mid \mathcal{D})$. This \textit{posterior distribution} captures the uncertainty in our parameter estimates given the observed data. 

The posterior distribution is given by Bayes' theorem as
\begin{align}
p(\theta \mid \mathcal{D}) = \frac{p(\mathcal{D} \mid \theta) p(\theta)}{p(\mathcal{D})} \propto p(\theta) \prod_{i=1}^N p(y_i \mid x_i, \theta),
\end{align}
where $p(\mathcal{D} \mid \theta)$ is again the \textit{likelihood} and $p(\theta)$ is the \textit{prior}. Notably, we will often write the posterior distribution as $p(\theta \mid \mathcal{D}) \propto \exp[-U(\theta)]$ where 
\begin{align}
U(\theta) = -\log p(\theta) - \sum_{i=1}^N \log p(y_i \mid x_i, \theta),
\end{align}
is the negative log-posterior.

Given the posterior distribution, predictions for a new input $x^*$ are then obtained via the posterior predictive distribution,
\begin{align} \label{eqn:posterior_predictive}
p(y^* \mid x^*, \mathcal{D}) = \int p(y^* \mid x^*, \theta) \, p(\theta \mid \mathcal{D}) \, d\theta,
\end{align}
which marginalises over the parameters $\theta$ and can thus provide calibrated predictive distributions. Crucially, however, evaluating \eqref{eqn:posterior_predictive} is generally infeasible, as it involves a high-dimensional integral over $\theta$.

\subsection{Bayesian Posterior Sampling}
A common approach to Bayesian inference is to replace the integral in \eqref{eqn:posterior_predictive} with a Monte Carlo average. Most commonly, this is achieved by drawing parameter samples $\theta \sim p(\theta \mid \mathcal{D})$ from the posterior using a Markov chain Monte Carlo (MCMC) approach.

In this work, we specifically focus on MCMC samplers that are expressed as discretisations of stochastic differential equations (SDEs) whose stationary distribution coincides with the target posterior \citep{ma2015complete}. Among these, the most common choice is the overdamped Langevin diffusion,
\begin{align} \label{eq:overdamped_general_sde}
d\theta_t &= f(\theta_t)\,dt + \sqrt{2D(\theta_t)}\,dW_t ,
\end{align}
where $D(\theta_t)$ is a symmetric positive semidefinite diffusion matrix and $f(\theta_t)$ is the drift. In the context of Bayesian posterior sampling, the drift is given by $f(\theta_t) = - \nabla U(\theta_t)$ and the diffusion matrix is typically set to $D(\theta_t) = I$, resulting in the following SDE:
\begin{align}
d\theta_t &= - \nabla U(\theta_t) \, dt + \sqrt{2} \, dW_t \label{eq:overdamped_U_sde} \\
&= (\nabla \log p(\theta_t) + \sum_{i=1}^N \nabla \log p(y_i \mid x_i, \theta_t)) dt + \sqrt{2} \, dW_t.
\end{align}

\subsubsection{Stochastic Gradient Methods.}
As discussed in the introduction, in large-scale settings the requirement to evaluate the gradient of the posterior over the entire dataset at each iteration is limiting. As such, stochastic gradient MCMC (SG-MCMC) methods replace the full posterior gradient with an unbiased minibatch estimator:
\begin{align}
  \widehat{\nabla U}(\theta;\mathcal{B}) = -\nabla \log p(\theta)
    - \frac{N}{B}\sum_{i\in\mathcal{B}}\nabla \log p(y_i\mid x_i,\theta),
  \end{align}
  where $\mathcal{B} \subset \{1,\ldots,N\}$ is a minibatch index set of size $B$, and $\{(x_i,y_i)\}_{i\in\mathcal{B}} \subset \mathcal{D}$ are the corresponding datapoints.

  \paragraph{Stochastic Gradient Langevin Dynamics} Stochastic Gradient methods for Bayesian posterior sampling were first introduced by \citet{welling2011bayesian} for Langevin dynamics, and only later generalised to other MCMC samplers \citep{chen2014stochastic, ma2015complete}. Concretely, \citet{welling2011bayesian} obtain the following Stochastic Gradient Langevin Dynamics (SGLD) update rule:

\vspace{0.3cm}
\begin{definition}[SGLD Update Rule]
    Given a minibatch $\mathcal{B}$, applying the Euler-Maruyama discretisation to the Langevin SDE~\Cref{eq:overdamped_U_sde} and replacing the full gradient with a minibatch estimate yields the SGLD update:
    \begin{align}
    \theta_{t+1} = \theta_t - \delta_t\,\widehat{\nabla U}(\theta_t;\mathcal{B}) + \sqrt{2\delta_t}\,\xi_t,
    \end{align}
    where $\xi_t\sim\mathcal N(0,I)$.
\end{definition}
  \vspace{0.1cm}
For the original full-gradient Langevin dynamics, Euler-Maruyama has weak order $1$ convergence. However, with stochastic gradients, the convergence properties are more subtle \citep{vollmer2016exploration}. With a fixed step size $\delta_t = \delta$, SGLD converges to a stationary distribution that is biased relative to the true posterior, reflecting both discretisation error and the noise from minibatch gradients. To obtain asymptotically exact expectations, a decreasing step size schedule satisfying $\delta_t \to 0$, $\sum_t \delta_t = \infty$, and $\sum_t \delta_t^2 < \infty$ needs to be used \citep{teh2016consistency}. However, this does come at the cost of slower mixing as the step size vanishes.

In our analysis below, we decompose the stochastic gradient as $\widehat{\nabla U}(\theta;\mathcal{B})
= \nabla U(\theta) + \zeta(\theta;\mathcal{B})$ and define $G(\theta)$ to be the minibatch-induced gradient covariance, $\mathrm{Cov}_{\mathcal B}[\zeta(\theta;{\mathcal B})\mid\theta]=G(\theta)$. Clearly we have $\mathbb{E}_{\mathcal B}[\zeta(\theta; \mathcal B)\mid\theta]=0$.
\section{Related Work}
We now briefly review related work on batch-size sensitivity in SG-MCMC methods and large-scale Bayesian inference.

\paragraph{Batch-Size Sensitivity.}
As stated, in practice, SGLD often requires large minibatches for stability, limiting scalability \citep{baker2019control}. Variance-reduction and control-variate methods reduce noise but introduce memory overhead or require periodic full-data passes \citep{dubey2016variance, baker2019control, li2020improving}. Alternative approaches include adaptive subsampling \citep{korattikara2014austerity}, preconditioning \citep{li2016preconditioned}, and importance sampling \citep{li2020improving}. Moreover, recent work characterizes stochastic gradient noise as heavy-tailed rather than Gaussian \citep{simsekli2019tail} and proposes specialized fractional dynamics to retarget the sampling distribution when such noise is present \citep{simsekli2020fractional}. While these methods primarily address the statistical properties of the noise, our solution improves robustness through its lattice-based discretisation, enabling stable sampling with small minibatches.

\paragraph{Large-Scale Bayesian Inference.}
Bayesian uncertainty estimation is important for large neural models, including large language models, where calibration and robustness are critical \citep{yang2023bayesian}. Scalable approximations such as Laplace methods \citep{daxberger2021laplace, yang2023bayesian, chen2024bayesian, sliwa2025mitigating} and variational inference \citep{harrison2024variational, wang2024blob, xiang2025fine, samplawski2025scalable} trade accuracy for efficiency. Recent work shows that sampling-based methods can be applied to large models with appropriate algorithmic structure, as in SGLD-Gibbs \citep{kim2025lift}.
\section{Stochastic Gradient Lattice Random Walk}
With the background established in the previous section, we now introduce the proposed method, Stochastic Gradient Lattice Random Walk (SGLRW). For this purpose, we first review the Lattice Random Walk (LRW) discretisation of Langevin dynamics and then introduce the proposed stochastic gradient extension.

\subsection{Lattice Random Walk}
The Lattice Random Walk (LRW) scheme, recently introduced in \citep{duffield2025lattice}, proposes an alternative to standard SDE discretisations, such as Euler-Maruyama, by substituting Gaussian noise with bounded binary increments. As shown in \citep{duffield2025lattice}, this has the benefit of being stable under non-Lipschitz gradients, which are common in deep learning. Additionally, the structure of LRW also lends itself to low-precision, stochastic hardware \citep{alaghi2013survey} and thermodynamic hardware \citep{conte2019thermodynamic} which has recently started to be developed for AI applications \citep{melanson2025thermodynamic}.

In LRW, at each iteration, the parameters are updated as
\begin{align}
\Delta\theta_{t+1} = S_t,
\qquad
(S_t)_i \in \{-\sqrt{2\delta_t}, +\sqrt{2\delta_t}\},
\end{align}
where each direction $(S_t)_i$ is sampled independently from the following state-dependent probability
\begin{align}
\mathbb{P}\!\left[(S_t)_i = \pm\sqrt{2\delta_t}\,\middle|\,\theta_t\right]
= \tfrac12 \pm \tfrac12 \sqrt{\tfrac{\delta_t}{2}}\,f_i(\theta_t).
\end{align}
These probabilities are valid whenever $\sqrt{\delta_t/2}\,|f_i(\theta_t)| \le 1$, where we write $f_i(\theta)$ to denote the $i$th component of the drift vector field $f(\theta)$.

By construction, the first two conditional moments satisfy
\begin{align}
\mathbb{E}[S_t \mid \theta_t] = \delta_t f(\theta_t),
\qquad
\mathbb{E}[S_t S_t^\top \mid \theta_t] = 2\delta_t I,
\end{align}
and LRW is shown to be weakly first-order consistent with the continuous-time Langevin dynamics in \Cref{eq:overdamped_general_sde} (Theorem~1 of \citep{duffield2025lattice}). With the specific choice of $f(\theta) = - \nabla U(\theta)$, LRW thus provides a valid discretisation of the Langevin dynamics in \Cref{eq:overdamped_U_sde}.

\begin{figure}[!b]
  \centering
  \includegraphics[width=\linewidth]{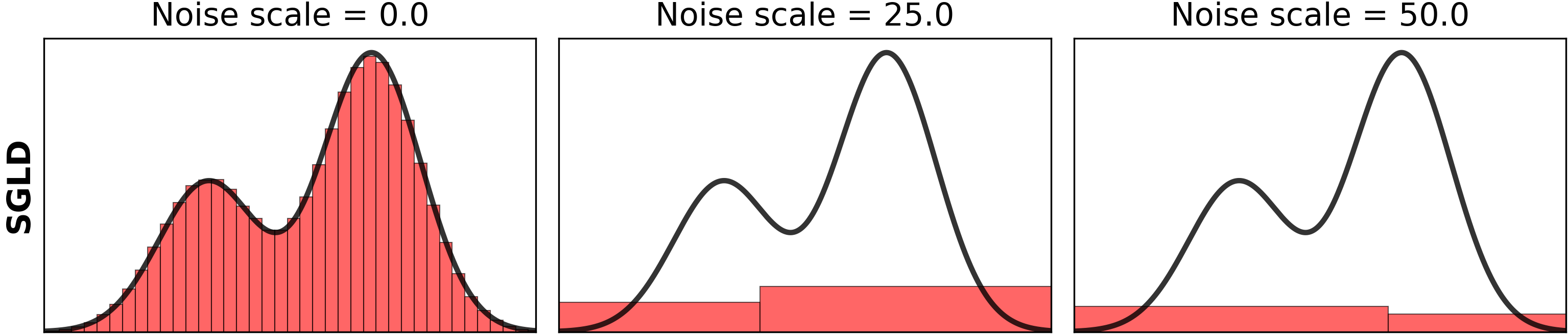}
  \includegraphics[width=\linewidth]{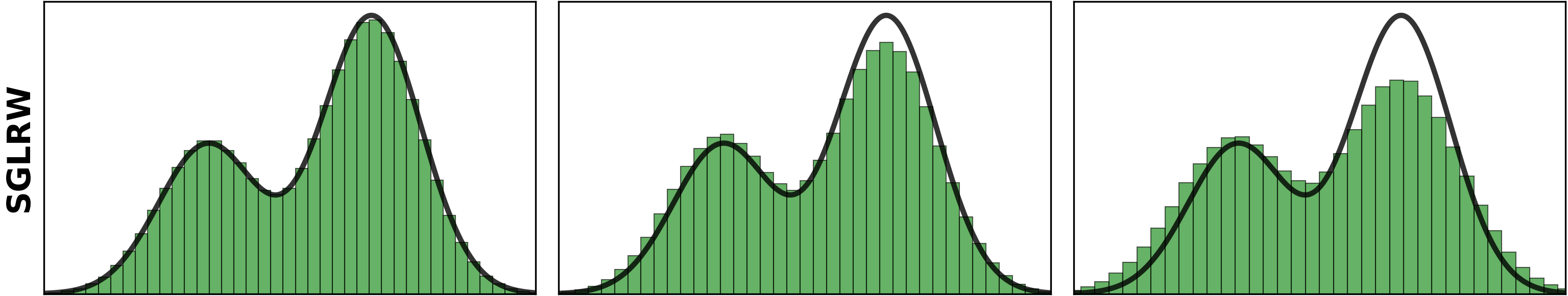}
  \caption{Multimodal univariate target with exact gradient corrupted by synthetic $\alpha$-stable noise ($\alpha = 1.5$) of increasing scale. We observe that as we increase the noise scale SGLD quickly fails while SGLRW remains stable.}
  \label{fig:1d}
  \vspace{-0.5cm}
\end{figure}

\subsection{Stochastic Gradient Lattice Random Walk} \label{subsec:sglrw}
We now come to the main contribution of our work, the proposal of Stochastic Gradient Lattice Random Walk (SGLRW), which replaces the stochastic gradient update rule of SGLD with a lattice-based update:

\vspace{0.2cm}
\begin{definition} [SGLRW Update Rule]\label{def:sglrw-update-rule}
  Given a minibatch $\mathcal{B}$, at the $t$th iteration, the Stochastic Gradient Lattice Random Walk updates the parameter vector as
  \begin{align}
    \theta_{t+1} = \theta_t + S_t.
  \end{align}
  where each coordinate $(S_t)_i\in\{-\sqrt{2\delta_t},+\sqrt{2\delta_t}\}$ is sampled from the state-dependent probability
  \begin{align}
  \mathbb{P}\!\left[(S_t)_i = \pm \sqrt{2\delta_t}\,\middle|\,\theta_t\right]  = \tfrac12 \mp \tfrac12 \sqrt{\tfrac{\delta_t}{2}}\, \widehat{\partial_i U}(\theta_t; \mathcal{B}),
  \label{eq:sglrw-transition}
  \end{align}
  which is valid whenever $\sqrt{\delta_t/2}\,|\widehat{\partial_i U}(\theta_t;\mathcal{B})| \le 1$.
\end{definition}

We hypothesize, and analytically evaluate next, that due to the bounded structure, large fluctuations in stochastic gradients have a less severe impact on the update in the case of SGLRW than in the case of SGLD. This is illustrated in Figure~\ref{fig:1d}, in a one-dimensional multimodal example.

\paragraph{Heavy-Tailed Noise} Using the stochastic gradient as $\widehat{\nabla U}(\theta;\mathcal{B})
= \nabla U(\theta) + \zeta(\theta;\mathcal{B})$, we set $U(\theta)$ to be the negative log-probability of the multimodal Gaussian, and choose $\zeta(\theta;\mathcal{B})$ to follow a heavy-tailed $\alpha$-stable distribution with $\alpha<2$, for which second moments do not exist. This distribution was shown to closely resemble the minibatch gradient noise in the standard SGD setting by \citet{simsekli2019tail}. 

As Figure~\ref{fig:1d} shows, in this regime, where stability depends critically on whether large stochastic fluctuations can induce rare but catastrophic updates, SGLD fails while SGLRW remains stable. In Appendix~\ref{app:clippedSGLDAppendix} we also provide an analysis using Gaussian gradient noise (Figure~\ref{fig:fullheavies}).

\subsection{Mean Squared Error Analysis}\label{sec:analysis}
Having introduced SGLRW, we now present an analysis of the differences between SGLD and SGLRW that highlight the benefits of using SGLRW with small batch sizes.

We follow a similar approach to the analysis of SGLD in \citet{chen2015convergence}, focussing on the mean squared error (MSE) $\mathbb{E}(\hat\phi-\bar\phi)^2$ between the true posterior expectation
\begin{align}
  \bar\phi := \int \phi(\theta)\,p(\theta \mid \mathcal{D})\,d\theta. 
\end{align}
and the ergodic average
\begin{align}
  \hat\phi := \frac{1}{L}\sum_{n=1}^L \phi(\theta_{n\delta_t}),
\end{align}
over the discrete-time Markov chain $\{\theta_{n\delta_t}\}_{n\ge 0}$ generated by an SG-MCMC method, such as SGLD or SGLRW, with step size $\delta_t$. Here, $\phi : \mathbb{R}^d \to \mathbb{R}^m$ represents a smooth test function, such as the posterior predictive distribution of a new data point $x^*$, as defined in \Cref{eqn:posterior_predictive}.

As a comparison of the MSE for SGLRW and SGLD, we present the following theorem:

\begin{figure*}[ht]
  \centering
    \includegraphics[width=1.\linewidth]{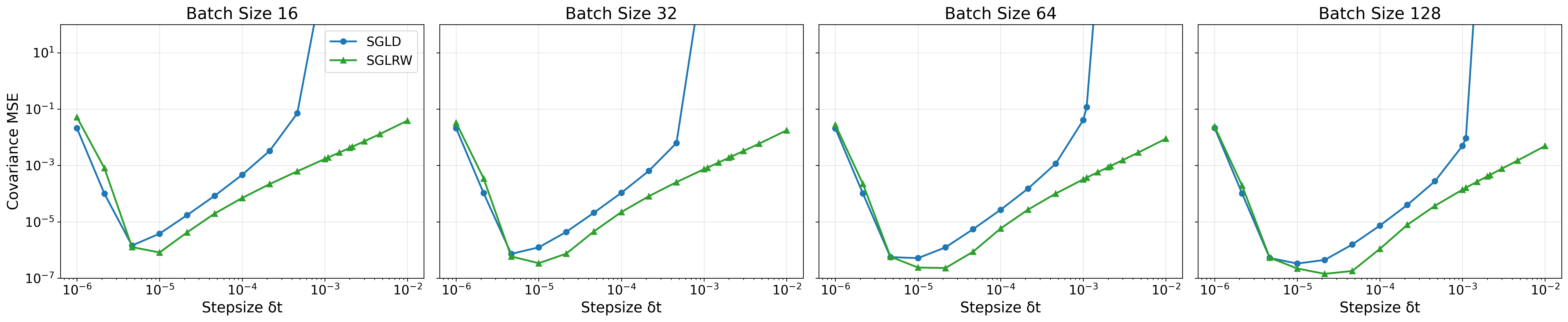}
  \caption{Mean-squared error (MSE) of the posterior covariance as a function of the step size $\delta_t$, shown for different batch sizes for $50$-dimensional Bayesian linear regression.}
  \label{fig:mseplots}
\end{figure*}

\vspace{0.3cm}
\begin{theorem}
\label{thm:refined-mse-bound}
Under the Assumption~\ref{ass:lyap}, we find that the MSE is bounded by the following three contributions
$$\text{MSE} \le C \left( \mathcal{E}_{\text{drift}} + \mathcal{E}_{\text{disc}} + \mathcal{E}_{\text{cov}} \right)$$
for some $C$ that depends on the target distribution. The covariance error term for SGLRW is never larger than that of SGLD
$$\mathcal{E}_{\text{cov}}^{\text{SGLRW}} \leq \mathcal{E}_{\text{cov}}^{\text{SGLD}}$$
while the other contributions are the same for both. 
Moreover, it is strictly smaller whenever $2\partial_iU\zeta_i +\zeta_i^2$ is non-vanishing for some direction $i$.
\end{theorem}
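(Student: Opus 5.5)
We follow the Poisson-equation approach of \citet{chen2015convergence}. Let $\mathcal{L}$ denote the generator of the overdamped Langevin diffusion \Cref{eq:overdamped_U_sde}, $\mathcal{L}\psi = -\nabla U\cdot\nabla\psi + \Delta\psi$, and let $\psi$ solve the Poisson equation $\mathcal{L}\psi = \phi-\bar\phi$. Assumption~\ref{ass:lyap} is taken to provide a Lyapunov function that controls $\psi$ and its first few derivatives uniformly along the chain.

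For either sampler we Taylor expand $\psi(\theta_{n+1})$ around $\theta_n$ to second order, with a third-order remainder:
\begin{align*}
\psi(\theta_{n+1}) = \psi(\theta_n) + \nabla\psi^\top\Delta\theta_n + \tfrac12 \Delta\theta_n^\top \nabla^2\psi\,\Delta\theta_n + R_n .
\end{align*}
We then replace the increment moments by their continuous-time counterparts $-\delta\nabla U$ and $2\delta I$. Summing over $n$ and dividing by $L\delta$ expresses $\hat\phi-\bar\phi$ as the sum of four pieces. The first is a boundary term $(\psi(\theta_L)-\psi(\theta_0))/(L\delta)$. The second is a martingale term. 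The third is the drift error, coming from $\mathbb{E}[\Delta\theta\mid\theta]-(-\delta\nabla U)$ together with the zero-mean noise $\zeta$. The fourth is the second-moment (covariance) error $\tfrac12\mathrm{tr}\big(\nabla^2\psi\,(\mathbb{E}[\Delta\theta\Delta\theta^\top\mid\theta,\mathcal B]-2\delta I)\big)$, with the remainder $R_n$ going into $\mathcal{E}_{\text{disc}}$. Squaring, taking expectations and applying $(a+b+c+d)^2\le 4(\cdot)$ yields $\text{MSE}\le C(\mathcal{E}_{\text{drift}}+\mathcal{E}_{\text{disc}}+\mathcal{E}_{\text{cov}})$, with the boundary and martingale terms absorbed into $\mathcal{E}_{\text{disc}}$ as $O(1/(L\delta))$.

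Both samplers have conditional mean $-\delta\widehat{\nabla U}=-\delta(\nabla U+\zeta)$. For SGLRW this follows from \Cref{eq:sglrw-transition}, since $\mathbb{E}[(S_t)_i]=\sqrt{2\delta}\cdot(-\sqrt{\delta/2}\,\widehat{\partial_iU})$. The drift contribution is therefore literally the same expression in $\zeta$ for both. The third moments are $O(\delta^{3/2})$ for both, with comparable constants, so $\mathcal{E}_{\text{disc}}$ has the same form.

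The covariance term is the crux. For SGLD, $\mathbb{E}[\Delta\theta\Delta\theta^\top\mid\theta,\mathcal B]-2\delta I=\delta^2\widehat{\nabla U}\widehat{\nabla U}^\top$. For SGLRW the coordinates are independent and each $(S_t)_i^2=2\delta$ exactly. The diagonal is therefore exactly $2\delta$, and the deviation is $\delta^2\widehat{\nabla U}\widehat{\nabla U}^\top$ with its diagonal removed. Compared with the full-gradient reference, each SGLD diagonal entry carries the extra stochastic piece $\delta^2(2\partial_iU\zeta_i+\zeta_i^2)$, which is absent in SGLRW.

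Defining $\mathcal{E}_{\text{cov}}$ as the expected squared ergodic average of $\tfrac12\mathrm{tr}(\nabla^2\psi\,\cdot)$ applied to these matrices, we would write
\begin{align*}
\mathcal{E}^{\text{SGLD}}_{\text{cov}}=\mathcal{E}^{\text{SGLRW}}_{\text{cov}}+\mathbb{E}\Big[\Big(\tfrac{\delta}{2L}\textstyle\sum_n\sum_i\partial_{ii}\psi\,\widehat{\partial_iU}^2\Big)^2\Big]+\text{cross term}.
\end{align*}
The main obstacle is that the cross term need not be nonnegative pointwise. The first thing to try is to bound each term separately in absolute value, defining $\mathcal{E}_{\text{cov}}$ as a sum of $\sum_{ij}$ entrywise magnitudes $\mathbb{E}|(\nabla^2\psi)_{ij}M_{ij}|^2$. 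With that definition, SGLRW's set of nonzero entries is a subset of SGLD's, which gives $\le$ at once. Strictness then follows whenever some diagonal term is nonzero, which is the condition $2\partial_iU\zeta_i+\zeta_i^2\neq0$ stated in the theorem. If one insists on the tighter trace form, the fallback is a Cauchy--Schwarz argument absorbing the cross term into the constant $C$.
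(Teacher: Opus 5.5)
Your skeleton matches the paper: Poisson equation, Taylor expansion, a split into boundary, martingale, drift, second-moment and remainder terms, then a norm-type upper bound that can only shrink when entries are zeroed. The gap is in the reference you use for the second-moment error. You measure the conditional second moment against $2\delta I$. Your inequality survives this choice, because SGLRW's deviation from $2\delta I$ is still the off-diagonal part of SGLD's. The strictness claim does not survive it. The SGLD matrix is then $\delta^2\widehat{\nabla U}\widehat{\nabla U}^\top$, whose $i$th diagonal entry is $\delta^2\widehat{\partial_iU}^2=\delta^2\big((\partial_iU)^2+2\partial_iU\zeta_i+\zeta_i^2\big)$. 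So the SGLD--SGLRW gap in your $\mathcal{E}_{\text{cov}}$ is positive exactly when $\partial_{ii}\psi\,\widehat{\partial_iU}\neq0$ with positive probability. That is not the condition $2\partial_iU\zeta_i+\zeta_i^2\neq0$: take $\zeta_i=0\neq\partial_iU$, or $\zeta_i=-\partial_iU\neq0$. Your condition is also nonzero with exact gradients, so it credits SGLRW for the LRW-versus-Euler--Maruyama discretisation difference rather than for robustness to minibatch noise.

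The missing idea is the paper's reference point. It uses $M_n=\delta_t^{-2}\mathbb{E}_{\xi_n}[\Delta\theta_n^{\mathrm{mb}}(\Delta\theta_n^{\mathrm{mb}})^\top-\Delta\theta_n^{\mathrm{fb}}(\Delta\theta_n^{\mathrm{fb}})^\top\mid\theta,\mathcal{B}_n]$, i.e.\ it compares against the full-batch increment of the same scheme. The full-batch deviation from $2\delta_t I$ is $O(\delta_t^2)$ and differs between the two schemes; it is absorbed into $\mathcal{R}_n$ using that the full-batch integrator is weak order one (Lemma~\ref{lem:refined-weak-expansion}). Only with this reference do you get $M_{n,\mathrm{SGLD}}=\zeta\zeta^\top+\nabla U\zeta^\top+\zeta\nabla U^\top$, with diagonal exactly $2\partial_iU\zeta_i+\zeta_i^2$, and $M_{n,\mathrm{SGLRW}}=\mathrm{offdiag}(M_{n,\mathrm{SGLD}})$.

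You also need three further repairs to land on the stated result.

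\emph{Removing $\nabla^2\psi$ from the error term.} Keeping $(\nabla^2\psi)_{ij}$ inside your entrywise sum makes strictness depend additionally on $\partial_{ii}\psi\neq0$. The paper instead bounds $|\tfrac{1}{2}M_n:\nabla^2\psi|\le\tfrac{1}{2}\|M_n\|_F\|\nabla^2\psi\|_F$ and uses Assumption~\ref{ass:lyap}(i) to set $\mathcal{E}_{\text{cov}}=\frac{\delta_t^2}{L}\sum_n\mathbb{E}\|M_n\|_F^2$. The identity $\|\mathrm{offdiag}(M)\|_F^2=\|M\|_F^2-\sum_iM_{ii}^2$ then gives both the inequality and precisely the stated strictness condition.

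\emph{Dropping the trace-form fallback.} It cannot deliver the claimed comparison at all, since $\mathbb{E}[a^2]$ and $\mathbb{E}[(a+b)^2]$ are not ordered. Absorbing the cross term into $C$ never gives a constant-free one-sided inequality, so the norm-type bound is the only route.

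\emph{Expanding to higher order.} A second-order expansion with a third-order remainder can only be controlled through $\mathbb{E}\|\Delta\theta\|^3=O(\delta^{3/2})$ per step. That gives an $O(\delta)$ MSE term, which dominates the $O(\delta^2)$ covariance term you want to compare as $\delta\to0$. The paper expands to fourth order and uses that the conditional third moments are $O(\delta_t^2)$: the full-batch part is absorbed by weak order one, and the minibatch-minus-full-batch part is controlled by Assumption~\ref{ass:lyap}(vi). It bounds the quartic remainder via Assumption~\ref{ass:lyap}(v), which yields the $\delta_t^2$ in $\mathcal{E}_{\text{disc}}$.
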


The first statement regarding the MSE upper bound, and the precise expression for each of the three contributions, is the content of Theorem \ref{thm:app_refined-mse}, which is an extension of Theorem~3 of \citet{chen2015convergence} to take into account non-vanishing second-order contributions.
In particular, the covariance contribution is given by 
\begin{align}
\mathcal{E}_{\text{cov}}= \frac{\delta_t^2}{L}\sum_{n=1}^L \mathbb{E}\big[\|M_n\|_F^2 \big].
\end{align}
In the above, 
the scheme-dependent second-order error $M_n$ induced by minibatching is defined as
\begin{align}\label{eq:defAn}
  M_n(\theta;\mathcal B_n)
  &:= \delta_t^{-2}\,\mathbb E_{\varepsilon_n}\!\Big[
  \Delta\theta_n^{\mathrm{fb}}(\Delta\theta_n^{\mathrm{fb}})^\top
  -\Delta\theta_n^{\mathrm{mb}}(\Delta\theta_n^{\mathrm{mb}})^\top
  \,\Big| \notag\\
  &\hspace{2.2em} \theta_{(n-1)\delta_t} = \theta,\mathcal B_n
  \Big],
  \end{align}
where $\Delta\theta_n^{\mathrm{mb}}$ and $\Delta\theta_n^{\mathrm{fb}}$ denote the one-step increments of the minibatch and full-batch updates, respectively. 

To prove the bound in~\Cref{thm:refined-mse-bound}, we observe a lemma quantifying the difference between the second-order structure of SGLD and SGLRW 
\vspace{0.4cm}
\begin{lemma} \label{lem:mse-bound}
  The second moment error of the minibatch update for SGLRW satisfies
  \begin{equation}
    M_{n,\mathrm{SGLRW}}(\theta;\mathcal B_n) =\mathrm{offdiag}\!\big(M_{n,\mathrm{SGLD}}(\theta;\mathcal B_n)\big),
  \end{equation}
  where $M_{n,\mathrm{SGLD}}(\theta,\mathcal B_n)$ is the second-order error for SGLD given by
  $$
M_{n,\mathrm{SGLD}}
=
\zeta\zeta^\top
+\nabla U \zeta^\top
+\zeta \nabla U^\top.
  $$
\end{lemma}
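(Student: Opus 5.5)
The plan is to compute the conditional second moments $\mathbb E_{\varepsilon_n}[\Delta\theta\,\Delta\theta^\top\mid\theta,\mathcal B_n]$ of both schemes directly, for the minibatch and the full-batch gradient, and then take the difference in \eqref{eq:defAn}. Throughout, $\widehat{\nabla U}=\nabla U+\zeta$ as in the decomposition introduced in the background section, and the randomness $\varepsilon_n$ is the injected noise (Gaussian $\xi_t$ for SGLD, the coordinate-wise Bernoulli draws for SGLRW), with $\mathcal B_n$ held fixed.

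\textbf{SGLD.} The minibatch increment is $\Delta\theta^{\mathrm{mb}}=-\delta_t(\nabla U+\zeta)+\sqrt{2\delta_t}\,\xi$, and the full-batch increment is the same expression with $\zeta=0$. Since $\xi$ is independent of the minibatch with mean zero and covariance $I$, the cross terms vanish in expectation. We get
\[
\mathbb E[\Delta\theta^{\mathrm{mb}}(\Delta\theta^{\mathrm{mb}})^\top]=\delta_t^2(\nabla U+\zeta)(\nabla U+\zeta)^\top+2\delta_t I,
\]
and the full-batch analogue is the same with $\zeta=0$. The $2\delta_t I$ terms cancel in the difference, and expanding the outer product gives, up to the global sign convention in \eqref{eq:defAn}, $\zeta\zeta^\top+\nabla U\zeta^\top+\zeta\nabla U^\top$. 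This is the stated $M_{n,\mathrm{SGLD}}$; I will match the orientation of the difference to the convention used in the statement.

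\textbf{SGLRW.} Conditional on $\theta$ and $\mathcal B_n$, the coordinates $(S_t)_i$ are independent. Each has mean $-\delta_t\widehat{\partial_i U}$, which follows from the transition probabilities \eqref{eq:sglrw-transition} as in the LRW moment identity. Each satisfies $(S_t)_i^2=2\delta_t$ deterministically.
\begin{itemize}
\item For $i\neq j$, independence gives $\mathbb E[(S_t)_i(S_t)_j]=\delta_t^2\,\widehat{\partial_iU}\,\widehat{\partial_jU}$, which is identical to the SGLD off-diagonal entry.
\item The diagonal is $2\delta_t$ regardless of the gradient, so it is the same for minibatch and full batch.
\end{itemize}
Subtracting the full-batch version therefore kills the diagonal and leaves the off-diagonal entries $\delta_t^2\big[(\partial_iU+\zeta_i)(\partial_jU+\zeta_j)-\partial_iU\partial_jU\big]$. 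Dividing by $\delta_t^2$, these equal the corresponding entries of $M_{n,\mathrm{SGLD}}$, which gives $M_{n,\mathrm{SGLRW}}=\mathrm{offdiag}(M_{n,\mathrm{SGLD}})$. The diagonal entries of $M_{n,\mathrm{SGLD}}$ that are discarded are exactly $2\partial_iU\zeta_i+\zeta_i^2$, which ties the lemma to the strictness clause of \Cref{thm:refined-mse-bound}.

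\textbf{Main obstacle.} The only delicate point is validity of the SGLRW probabilities. The computation requires $\sqrt{\delta_t/2}\,|\widehat{\partial_iU}|\le 1$ for both the minibatch and the full-batch gradients, so that the mean formula holds exactly and no clipping enters. I will state this as a standing assumption, matching \Cref{def:sglrw-update-rule}. Beyond that, the remaining work is bookkeeping: checking that conditioning on $\mathcal B_n$ makes $\zeta$ deterministic inside $\mathbb E_{\varepsilon_n}$, and keeping the sign convention consistent.
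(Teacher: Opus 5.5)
Your proposal is correct and follows essentially the same route as the paper's proof: you compute the conditional second moments given $(\theta,\mathcal B_n)$, use that the SGLRW diagonal is deterministically $2\delta_t$ and that the off-diagonal entries factorise by conditional independence into $\delta_t^2(\partial_iU+\zeta_i)(\partial_jU+\zeta_j)$, and subtract the full-batch version. The paper couples the two SGLD increments through a shared $\xi$ and expands their difference, which by linearity amounts to your separate computation; the sign issue you flag is real (the main-text definition \eqref{eq:defAn} is full-batch minus minibatch, while the appendix derivation uses minibatch minus full-batch) but harmless, since the offdiag identity holds under either convention and only $\|M_n\|_F^2$ enters the bound.
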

\begin{proof}
   See Appendix~\ref{app:covariance-derivation}.
\end{proof} 
The above lemma highlights the fact that, in SGLRW, the lattice constraint enforces fixed-magnitude coordinate updates, so the diagonal of the one-step second moment of the increment is deterministic. In contrast, for SGLD this diagonal depends on the stochastic gradient and is inflated by minibatch noise.

Combining the above, we readily obtain the error bound of Theorem \ref{thm:refined-mse-bound}. 
This shows that  under some mild conditions, the SGLRW discretisation achieves a strictly tighter MSE bound than the SGLD, leading to a more robust implementation of minibatch gradient updates. 

\subsubsection{Validation and practical considerations} 
We experimentally validate this theoretical finding in Figure~\ref{fig:mseplots}, where we compare the MSE for SGLRW and SGLD for Bayesian linear and logistic regression. As $\delta_t$ increases, SGLD becomes unstable and the covariance error explodes, while SGLRW remains stable for all batch sizes. A further discussion of the experimental setup used to generate this insight is provided in~\Cref{subsec:EXP_Bayesian_Linear_Regression}. 

Despite the theoretical guarantee in  Lemma~\ref{lem:mse-bound},  
we note that the theoretical restriction $\sqrt{\delta_t/2}\,|\widehat{\partial_i U}(\theta_t;\mathcal{B})| \le 1$ can be challenging to arrange in practice without overly conservative step size tuning. 
To address this, in the implementation we clip the quantity $\sqrt{\delta_t/2}\,|\widehat{\partial_i U}(\theta_t;\mathcal{B})|$ to one, instead of step size tuning. Although this clipping can technically lead to a bias, our empirical evaluations suggest that this does not pose any serious problem in practice, in the regimes of $\delta_t$ and minibatch sizes that one is interested in.     
\section{Experimental evaluation}\label{sec:experiments}
We evaluate SGLRW on posterior sampling problems for linear regression, logistic regression, and predictive classification tasks. Across all experiments, we vary the minibatch size $B$ and base step size $\delta_t$ under matched decaying learning-rate schedules of the form $\delta_t(1+t)^{-0.55}$, in line with \citep{welling2011bayesian}. All experiments were implemented in \texttt{posteriors} \citep{duffield2024scalable}.

\subsection{Strong Baseline: Clipped SGLD.}\label{subsec:Clipped_SGLD}
Similar to our analysis in~\Cref{sec:analysis}, we also compare SGLRW against SGLD in our empirical evaluation here. Additionally, we introduce \textit{Clipped-SGLD} as an additional strong baseline. Gradient clipping is standard in large-scale SGD, where saturating the drift prevents rare large gradients from producing unstable updates. It is therefore natural to ask whether the same stabilisation can be applied to SGLD.

We define the Clipped-SGLD update rule as follows:
\begin{align}\label{subsec:Clipped_SGLD_baseline}
    \theta_{t+1} = \theta_t
      -\mathrm{clip}(\delta_t\,\widehat{\nabla U}(\theta_t;\mathcal{B});R)
      + \sqrt{2\delta_t}\,\xi_t, 
\end{align}
where $\mathrm{clip}(x;R)_i = \mathrm{sign}(x_i)\min\{|x_i|,R\}$ and $R=\sqrt{2\delta_t}$. Note that this is a componentwise clipping operation, and deviates slightly from the standard definition of gradient clipping in SGD. In gradient clipping for standard SGD, the clipping is performed over the entire update vector $\Delta\theta_t$, while here we clip only part of the update vector. Clipping the entire update vector would result in the SDE having a different stationary distribution (Appendix~\ref{app:full-increment-clipping-limit}), while drift-truncated Euler schemes are known to converge to the exact Langevin dynamics in the small-step limit \citep{roberts1996exponential, hutzenthaler2015numerical}. 

In Appendix~\ref{app:clippedSGLDAppendix} we provide the same MSE and heavy-tailed noise analysis for Clipped-SGLD as previously discussed in~\Cref{sec:analysis} for SGLRW.

\begin{table}[t]
    \caption{Kullback--Leibler (KL) divergence between the true posterior and the empirical Gaussian fit of the samples, shown for different samplers (SGLD, SGLRW, Clipped SGLD), minibatch size $B$, and base learning rate $\delta_t$. 
    The Monte Carlo reference KL divergence, which quantifies the intrinsic sampling variability when estimating the analytic posterior, is $0.055201$.
    \textbf{Bold} indicates the lowest KL divergence for a given $(B, \delta_t)$ pair.}
    \centering
    \label{tab:kl_comparison_bold}
    \setlength{\tabcolsep}{4pt}
    \begin{small}
    \begin{tabular}{r c c c c}
    \hline
    \multicolumn{2}{c}{\textbf{Hyperparameters}} & \multicolumn{3}{c}{\textbf{KL Divergence}} \\
    \cline{1-2} \cline{3-5}
    $B$ & $\delta_t$ & \textbf{SGLD} & \textbf{SGLRW} & \textbf{Clipped SGLD} \\
    \hline
    8 & $10^{-3}$ & 19.889 & \textbf{6.060} & 18.184 \\
    8 & $10^{-4}$ & 0.483 & \textbf{0.202} & 0.777 \\
    \hline
    16 & $10^{-3}$ & 7.155 & \textbf{2.317} & 8.530 \\
    16 & $10^{-4}$ & 0.175 & \textbf{0.070} & 0.204 \\
    \hline
    32 & $10^{-3}$ & 2.441 & \textbf{0.729} & 3.540 \\
    32 & $10^{-4}$ & 0.087 & \textbf{0.064} & 0.091 \\
    \hline
    64 & $10^{-3}$ & 0.838 & \textbf{0.165} & 1.114 \\
    64 & $10^{-4}$ & 0.063 & \textbf{0.055} & 0.067 \\
    \hline
    128 & $10^{-3}$ & 0.315 & \textbf{0.074} & 0.351 \\
    128 & $10^{-4}$ & \textbf{0.054} & \textbf{0.054} & 0.061 \\
    \hline
    256 & $10^{-3}$ & 0.140 & \textbf{0.065} & 0.141 \\
    256 & $10^{-4}$ & \textbf{0.051} & 0.056 & 0.062 \\
    \hline
    512 & $10^{-3}$ & 0.086 & \textbf{0.065} & 0.088 \\
    512 & $10^{-4}$ & \textbf{0.052} & 0.054 & 0.062 \\
    \hline
    1000 & $10^{-3}$ & \textbf{0.058} & \textbf{0.058} & 0.060 \\
    1000 & $10^{-4}$ & \textbf{0.052} & 0.054 & 0.061 \\
    \hline
    \end{tabular}
    \end{small}
    \end{table} 

\subsection{Bayesian Linear Regression}\label{subsec:EXP_Bayesian_Linear_Regression}
We first evaluate SGLRW using a linear--Gaussian model where the posterior admits a closed form. Using the closed-form solution, we can analytically compute the KL divergence between the true posterior and the empirical Gaussian fit to the samples. This allows us to provide a more rigorous evaluation of the empirical performance of SGLRW compared to SGLD and Clipped-SGLD. 

Concretely, the linear model we consider is given by
\begin{align}
    y = X\theta + \varepsilon, \qquad \varepsilon \sim \mathcal{N}(0,\sigma^2 I),
\end{align}
where $X\in\mathbb{R}^{N\times d}$ is the design matrix and $\varepsilon\in\mathbb{R}^N$ is the noise vector. With a Gaussian prior $p(\theta) = \mathcal{N}(0,\tau^{-1} I)$, the resulting posterior $\mathcal{N}(\mu,\Sigma)$ is therefore given by
\begin{align}
    \Sigma^{-1} = \tfrac{1}{\sigma^2} X^\top X + \tau I,
    \qquad
    \mu = \tfrac{1}{\sigma^2} \Sigma X^\top y.
\end{align}
Or, equivalently, in the negative log-posterior form,
\begin{align}
    U(\theta) = \tfrac{1}{2\sigma^2}\|y - X\theta\|^2 + \tfrac{\tau}{2}\|\theta\|^2.
\end{align}

\begin{figure}[!b]
    \centering
    \includegraphics[width=1.\linewidth]{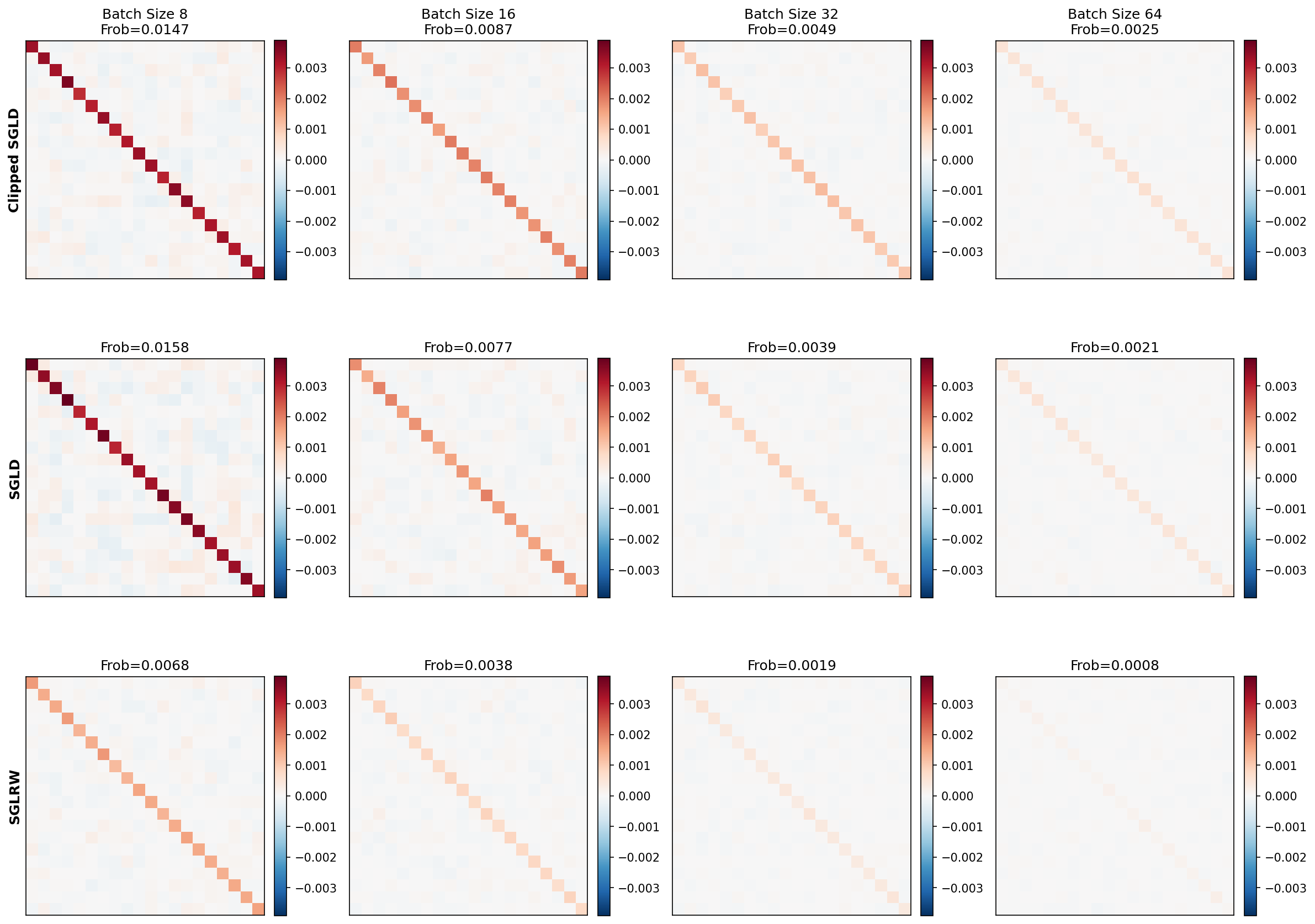}
    \caption{Covariance difference matrices $\Sigma_{\text{est}}-\Sigma_{\text{true}}$ for Bayesian linear regression at stepsize $\delta_t=10^{-3}$, shown across increasing minibatch sizes $B$. \textbf{Top:} Clipped SGLD. \textbf{Middle:} SGLD. \textbf{Bottom:} SGLRW. Each panel visualizes the deviation of the empirical posterior covariance from the analytic posterior covariance; the Frobenius norm (Frob) reports the total error magnitude. We observe that the error in the diagonal terms of the estimated covariance matrices is lower for SGLRW than SGLD and Clipped-SGLD}
    \label{fig:covariance_err}
\end{figure}

\paragraph{Setup}
Synthetic data are generated with $N=1000$ and $d=20$, using $\theta^\ast \sim \mathcal{N}(0,I)$, $\sigma^2 = 1.5$, and $\tau = 10^{-2}$.
Each method is run with $2{,}000$ parallel particles for $10{,}000$ iterations, using matched minibatch sizes and the same decaying learning-rate schedule.

As stated, the performance is quantified using the analytic Kullback--Leibler divergence between the true posterior $\mathcal{N}(\mu,\Sigma)$ and the empirical Gaussian fit to the samples, computed from their estimated mean and covariance.

\subsubsection{Results}
The KL curves reveal two consistent effects, portrayed in Table~\ref{tab:kl_comparison_bold}: (i) \emph{step size sensitivity:} SGLRW remains stable and continues to decrease KL under larger learning rates $\delta_t$ where SGLD diverges.
(ii) \emph{batch efficiency:} for comparable KL at matched $\delta_t$, SGLRW achieves the same accuracy with approximately half the minibatch size, indicating greater robustness to stochastic-gradient noise.

These trends are accompanied by different empirical covariance behaviour across methods, as illustrated in Figure~\ref{fig:covariance_err}. Here we can observe that the error in the diagonal terms of the estimated covariance matrices is significantly lower for SGLRW than SGLD and Clipped-SGLD, and in general less impacted by the batch size.

\subsection{UCI Bayesian Logistic Regression}\label{subsec:EXP_Bayesian_Logistic_Regression}
We now compare the sensitivity of SGLRW, SGLD and Clipped-SGLD on a non-Gaussian posterior sampling task, specifically logistic regression with the breast cancer dataset~\citep{breast_cancer_wisconsin}. The UCI breast cancer dataset consists of 569 samples, 30 features and 2 classes, which results in a 31-dimensional posterior distribution. 

\paragraph{Setup}
Since the true posterior is not available analytically, we compare to a gold-standard sample generated with NUTS~\citep{hoffman2014no} via Pyro~\citep{bingham2019pyro}. Throughout, we use a standard Gaussian prior on all parameters. In all cases, we ran 5{,}000 parallel chains for 1{,}000 steps, retaining only the final sample of each chain. All runs are averaged over 5 seeds.

\subsubsection{Results}
Comparing the inferred KL divergence of the three different methods in Table~\ref{tab:kl_comparison_logistic}, we see that SGLRW consistently outperforms SGLD and Clipped-SGLD across learning-rate settings, similar to what was observed in the linear regression experiment. However, in contrast to the linear regression experiments where Clipped-SGLD performed roughly similarly to SGLD and significantly worse than SGLRW, for the logistic regression experiment considered here Clipped-SGLD shows itself as a strong baseline. In the small-batch limit, where we observe a complete failure of SGLD, Clipped-SGLD performs only slightly worse than SGLRW, and occasionally better as the batch size increases.

\begin{figure*}[t]
    \centering
    \includegraphics[
      width=0.98\linewidth,
      trim=0pt 0pt 0pt 22pt,
      clip
    ]{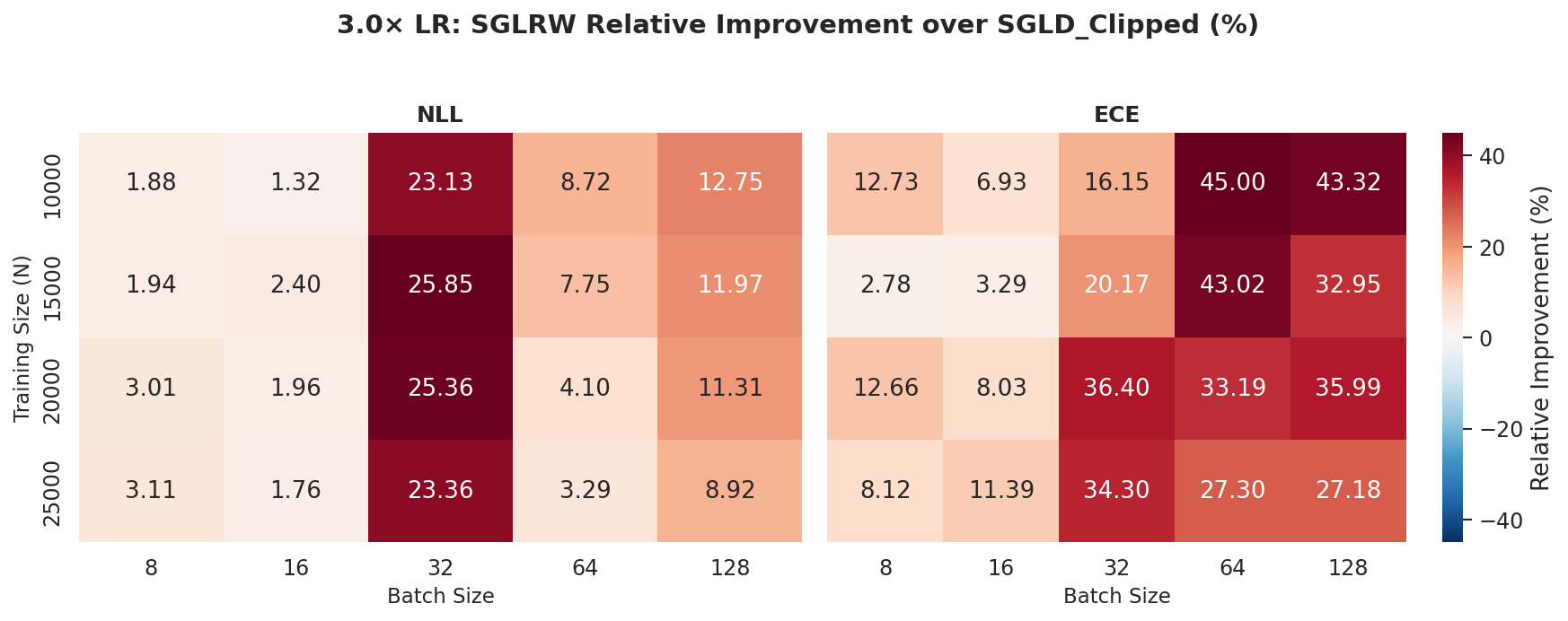}
    \caption{Relative improvement of SGLRW over clipped SGLD at increased learning-rate scale ($\eta_0 = 1.5 \times 10^{-4}$). Heatmaps show percentage differences in negative log-likelihood (\textbf{left}) and expected calibration error (\textbf{right}) across training-set sizes and minibatch sizes.}
    \label{fig:relative_improvement_3x}
\end{figure*}

\subsection{Sentiment Classification With LLM Features}
\label{sec:LMM}
Having considered two tasks with well-understood posterior distributions, we now turn to a more realistic problem where in practice issues arise due to the model size impacting the possible size of the minibatch: language modelling using LLMs.
Specifically, we evaluate SGLRW on a sentiment classification task using the IMDB dataset \citep{maas2011learning}, following a setup similar to \citet{harrison2024variational}. The dataset consists of 50,000 strongly polarized movie reviews, split evenly into training and test sets. To study the effect of data scale, we additionally consider subsampled training sets of varying sizes.

\begin{table}[t]
\vspace{0.3cm}
\centering
\caption{Inferred Kullback–Leibler divergence for the logistic regression problem. The KL divergence is measured between Gaussian distributions fitted to the empirical mean and covariance of a gold-standard reference sample and those obtained by each algorithm under the specified hyperparameters. 
\textbf{Bold} indicates the lowest KL divergence for a given $(B,\delta_t)$ pair.}
\label{tab:kl_comparison_logistic} 
\begin{small}
\setlength{\tabcolsep}{4pt}
\begin{tabular}{r c c c c}
\hline
\multicolumn{2}{c}{\textbf{Hyperparameters}} & \multicolumn{3}{c}{\textbf{KL Divergence}} \\
\cline{1-2} \cline{3-5}
$B$ & $\delta_t$ & \textbf{SGLD} & \textbf{SGLRW} & \textbf{Clipped SGLD} \\
\hline
1 & $10^{0}$ & inf & \textbf{8.3504} & 9.3560 \\
1 & $10^{-1}$ & 16.6812 & \textbf{6.0144} & 6.4732 \\
1 & $10^{-2}$ & 10.3472 & \textbf{3.9594} & 4.2549 \\
\hline
2 & $10^{0}$ & inf & \textbf{7.6706} & 9.1187  \\
2 & $10^{-1}$ & 8.9809 & \textbf{5.0197} & 5.5856  \\
2 & $10^{-2}$ & 5.4982 & \textbf{2.6152} & 2.8710  \\
\hline
4 & $10^{0}$ & 27.3098 & \textbf{6.9768} & 8.8915 \\
4 & $10^{-1}$ & 3.7046 & \textbf{3.6632} & 4.2974 \\
4 & $10^{-2}$ & 3.0155 & \textbf{1.4698} & 1.6333 \\
\hline
8 & $10^{0}$ & 19.0951 & \textbf{6.0397} & 8.4212 \\
8 & $10^{-1}$ & \textbf{1.3149} & 1.9429 & 2.4981 \\
8 & $10^{-2}$ & 1.9578 & \textbf{1.0051} & 1.0553 \\
\hline
16 & $10^{0}$ & 12.0667 & \textbf{4.5631} & 7.0883  \\
16 & $10^{-1}$ & \textbf{0.4993} & 0.9423 & 1.2400  \\
16 & $10^{-2}$ & 1.5158 & 0.8059 & \textbf{0.8027}  \\
\hline
32 & $10^{0}$ & 6.7814 & \textbf{2.8940} & 4.9928 \\
32 & $10^{-1}$ & \textbf{0.2506} & 0.4538 & 0.5611 \\
32 & $10^{-2}$ & 1.3235 & 0.6845 & \textbf{0.6629} \\
\hline
64 & $10^{0}$ & 3.4559 & \textbf{1.7535} & 3.0879 \\
64 & $10^{-1}$ & \textbf{0.1769} & 0.2153 & 0.2417 \\
64 & $10^{-2}$ & 1.2267 & 0.6490 & \textbf{0.6169} \\
\hline
\end{tabular}
\end{small}
\end{table}

\paragraph{Setup}
For each experiment, we extract fixed sequence embeddings from a pretrained OPT language model \citep{zhang2022opt} with 350M parameters by taking the final-layer representation of the last token. These embeddings are held fixed, and Bayesian posterior sampling is performed over the parameters of a two-layer binary classification head. This isolates the behaviour of the sampling algorithms from learning the data representation.

Each method is run with 15 parallel chains for 10{,}000 iterations, discarding the first 5{,}000 iterations as burn-in. For each training-set size, we vary the minibatch size to probe batch-size sensitivity while keeping learning-rate schedules and other hyperparameters matched across methods.

Performance is evaluated on the held-out test set using classification accuracy, negative log-likelihood (NLL), and expected calibration error (ECE).

\subsubsection{Results}
As highlighted previously, SGLRW has consistently been less sensitive to the choice of learning-rate schedule than standard SGLD; it can handle substantially larger step sizes while still maintaining stability across all training-set sizes and minibatch configurations. As such, we first compare the performance of SGLRW against standard SGLD and Clipped-SGLD with small initial step sizes. Following this, we explore the other end of the spectrum, where we compare the performance of SGLRW against Clipped-SGLD at step sizes for which standard SGLD is unstable.

\paragraph{Comparison at small step sizes.}
Fig.~\ref{fig:batchsize_25k} shows predictive accuracy and negative log-likelihood with respect to minibatch size for a large training-set size ($N=25{,}000$) for a run with $\eta_0 = 7.5 \times 10^{-6}$. At small minibatch sizes, SGLD exhibits a degradation in both accuracy and NLL, whereas SGLRW remains stable across the sweep. As the minibatch size increases, the accuracy of SGLD improves, while differences in NLL persist at moderate batch sizes. Similar to SGLRW, Clipped-SGLD also outperforms SGLD in this regime, again highlighting the strength of the baseline.

\begin{figure}
    \centering
    \includegraphics[width=\linewidth]{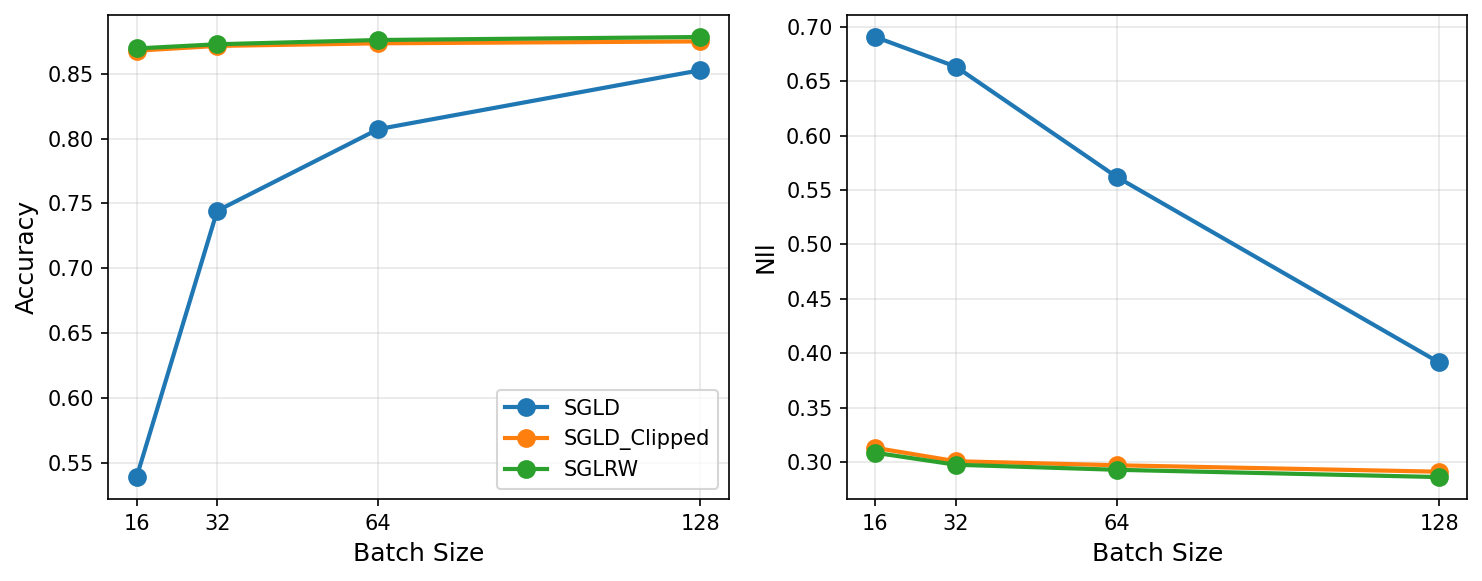}
    \caption{Predictive accuracy and negative log-likelihood (NLL) as a function of minibatch size for a large training-set size ($N=25{,}000$), using base learning schedule with $\eta_0 = 7.5 \times 10^{-6}$.}
    \label{fig:batchsize_25k}
\end{figure}

\paragraph{Clipped-SGLD versus SGLRW at larger step sizes.}
We next compare Clipped-SGLD and SGLRW in regimes where standard SGLD is unstable, focusing on step sizes beyond the conservative regime. Across all batch sizes considered in Figure~\ref{fig:relative_improvement_3x}, SGLRW consistently outperforms Clipped-SGLD in terms of predictive quality, with this behaviour remaining robust across training-set sizes.

The relative advantage of SGLRW becomes most pronounced in small-to-moderate minibatch regimes. While accuracy differences remain minor, SGLRW consistently attains lower negative log-likelihood and improved calibration relative to Clipped-SGLD. A representative comparison at an increased learning-rate scale is shown in Figure~\ref{fig:relative_improvement_3x}, with complete results across learning-rate schedules reported in Appendix~\ref{app:sentiment-classification} (Figures~\ref{fig:acc_heatmaps_350m}, \ref{fig:nll_heatmaps_350m}, and~\ref{fig:ece_heatmaps_350m}).
\vspace{-0.2cm}
\section{Conclusion}
This work introduced Stochastic Gradient Lattice Random Walk (SGLRW), a robust discretisation of Langevin dynamics for Bayesian inference. By replacing traditional Gaussian increments with coordinate-wise bounded updates, SGLRW significantly reduces sensitivity to minibatch size and stochastic gradient noise, a common failure point for standard SGLD. Our theoretical analysis demonstrated that SGLRW achieves strictly tighter mean squared error (MSE) bounds than SGLD by confining minibatch-induced noise to the off-diagonal elements of the update covariance.

Empirically, SGLRW showed superior stability and predictive performance across diverse tasks, from linear regression to LLM-based sentiment classification. This is in comparison to both standard SGLD as well as a strong baseline in the form of Clipped-SGLD. Notably, it remains stable under various conditions where SGLD diverges and maintains high calibration even with small minibatches.

Beyond its algorithmic advantages, the structure of SGLRW makes it uniquely suited for implementation on energy-efficient, low-precision, and stochastic hardware~\citep{duffield2025lattice}, which is becoming increasingly important as the impact of AI on energy consumption and sustainability becomes a major concern \citep{aifer2025solving}.

\section*{Impact Statement}
This work introduces a posterior sampling method that is naturally compatible with stochastic hardware and as such SGLRW may reduce energy and memory requirements in practical implementations. Beyond this, this paper presents work whose goal is to advance the field of machine learning and as such has many potential societal consequences.

\bibliography{shared}
\bibliographystyle{icml2026}

\appendix
\onecolumn
\section{Appendix}

\subsection{Analysis plots including Clipped SGLD}
\label{app:clippedSGLDAppendix}
For completeness, we also include Clipped SGLD in the plots for the analysis.

For the covariance MSE analysis (Figure~\ref{fig:mseplots}), Clipped SGLD already mitigates the sharp error explosion observed for SGLD at larger stepsizes (Figure~\ref{fig:fullMSEs}). However, across batch sizes and learning rates, SGLRW consistently attains comparable or lower covariance MSE throughout the stable regime.
\begin{figure}[!b]
    \centering
    \includegraphics[width=\linewidth]{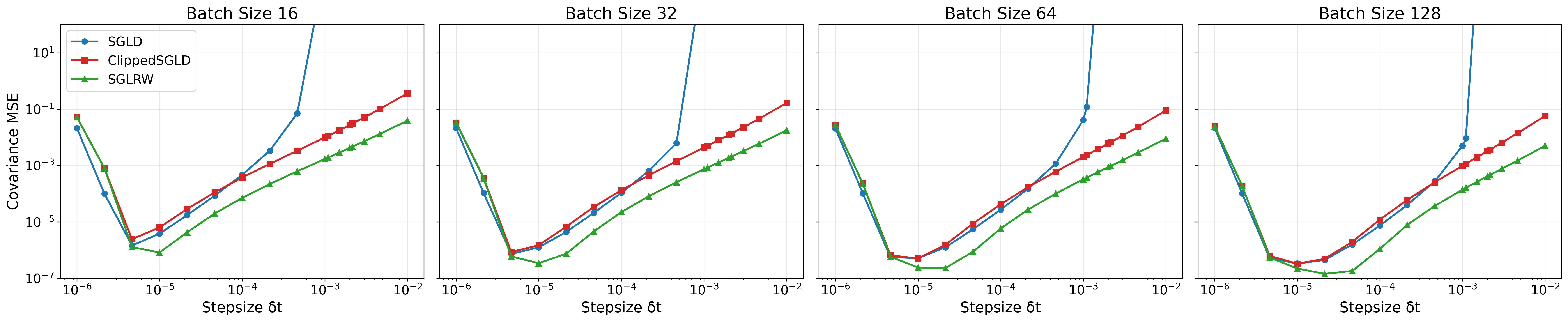}
    \includegraphics[width=\linewidth]{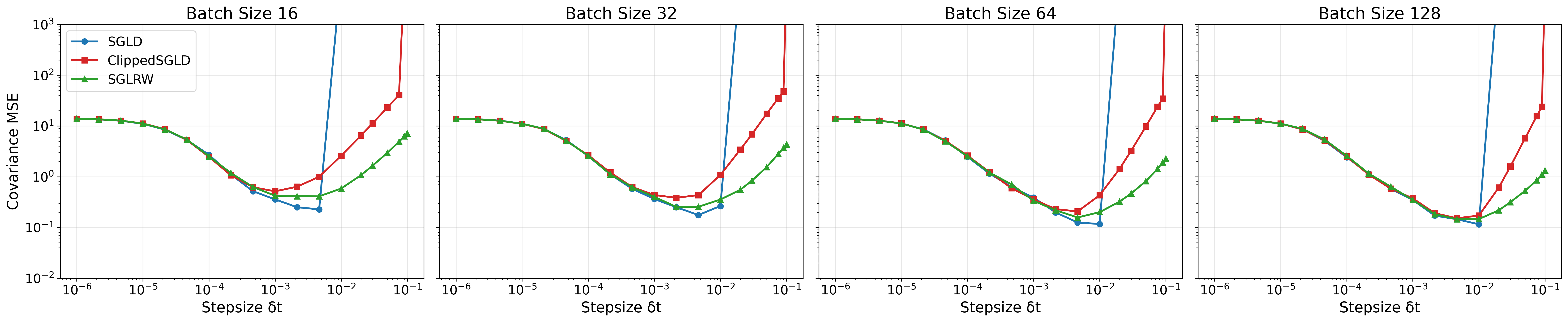}
    \caption{Mean-squared error (MSE) of the posterior covariance as a function of the stepsize $\delta_t$, shown for different batch sizes. \textbf{Top:} $50$-dimensional Bayesian linear regression. \textbf{Bottom:} Bayesian logistic regression on the breast cancer dataset \citep{breast_cancer_wisconsin}.}
    \label{fig:fullMSEs}
\end{figure}

For the heavy-tailed noise robustness analysis (Figure~\ref{fig:1d}), both Clipped SGLD and SGLRW prevent the severe instability exhibited by SGLD under heavy-tailed gradient noise. Across noise scales, SGLRW maintains a closer qualitative agreement with the target distribution than Clipped SGLD.
\begin{figure}[!b]
    \centering
    \includegraphics[width=0.47\linewidth]{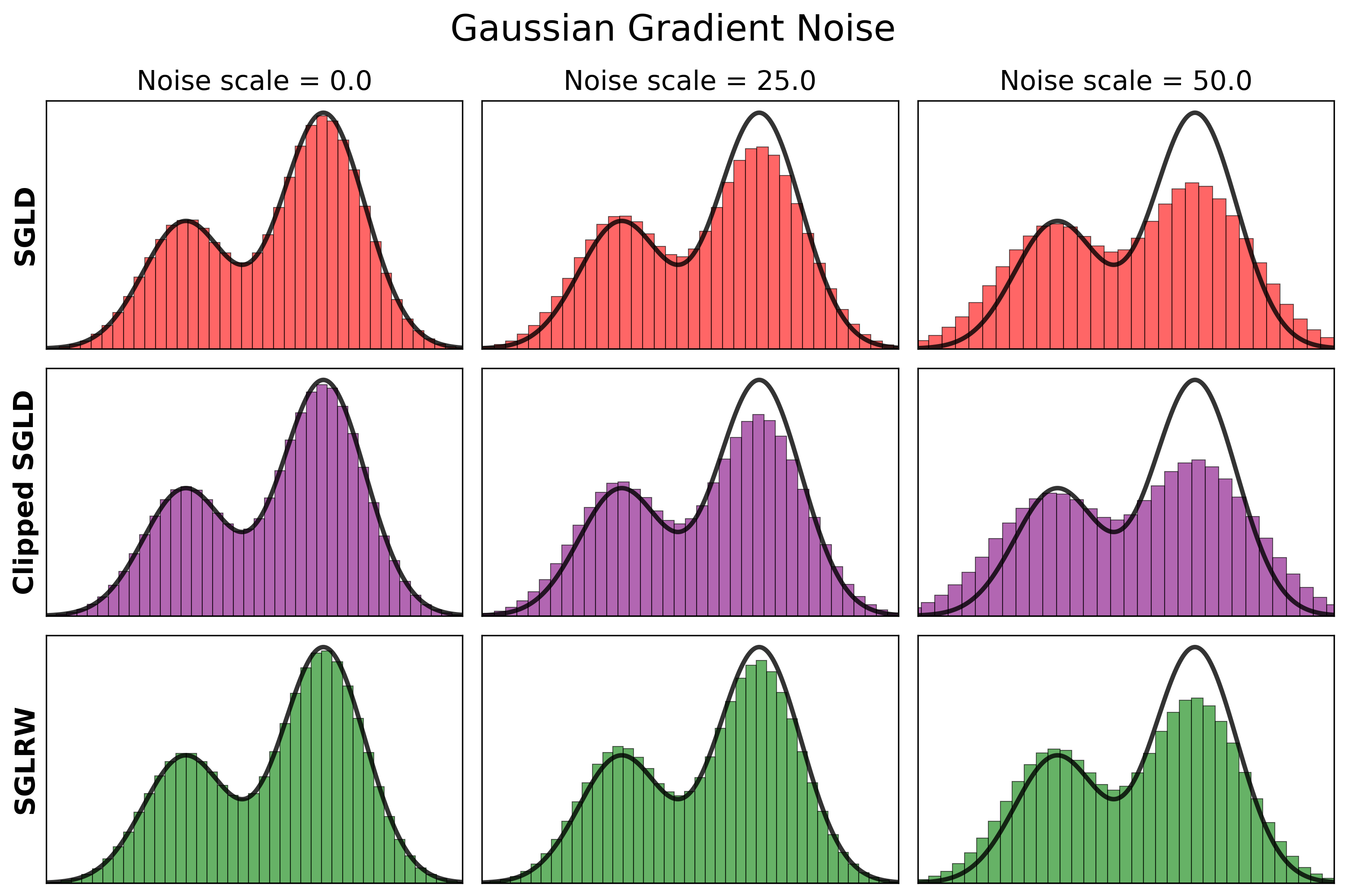}
    \qquad
    \includegraphics[width=0.47\linewidth]{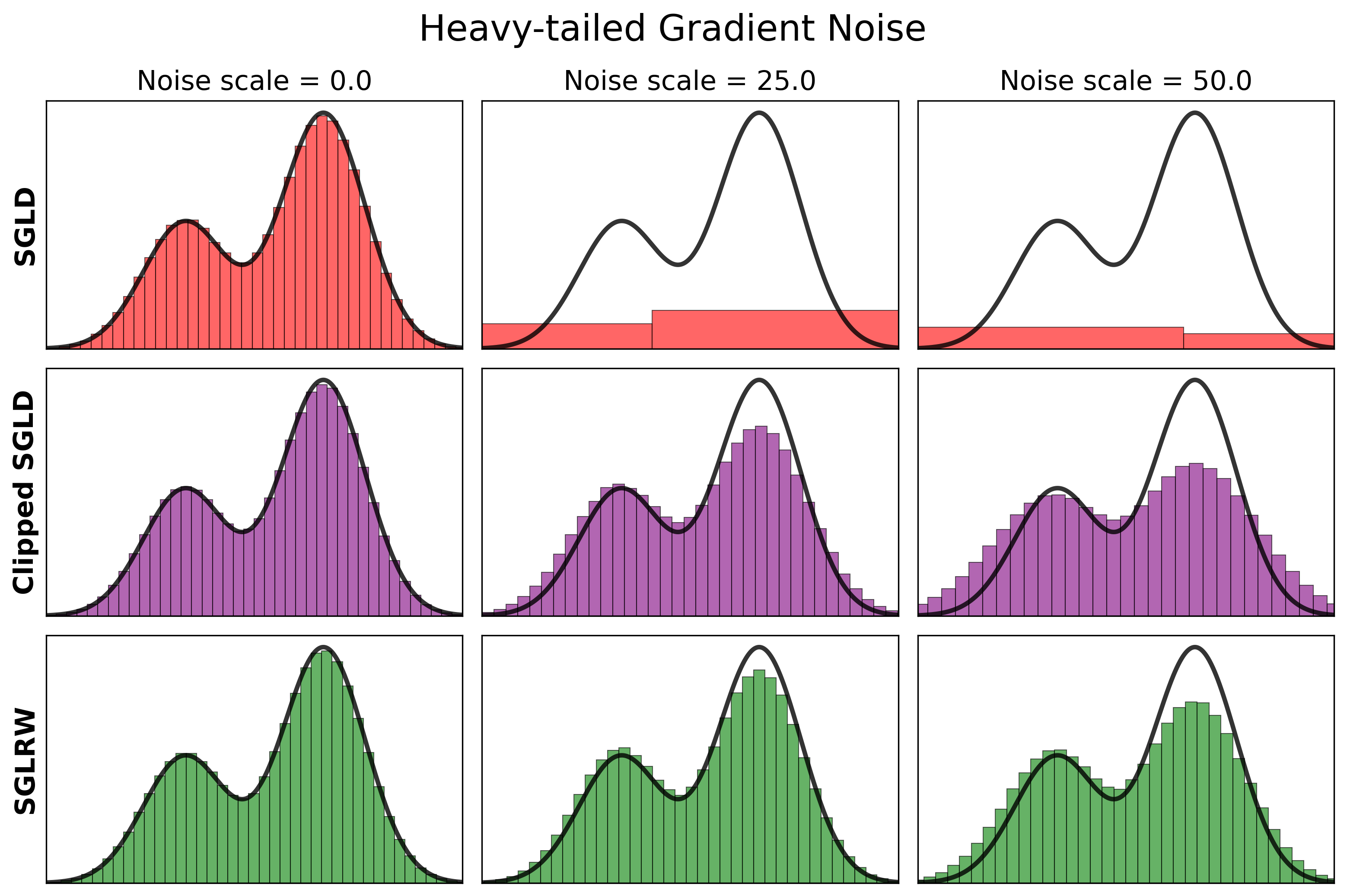}
    \caption{Multimodal univariate target with exact gradient corrupted by synthetic noise of increasing scale. \textbf{Left:} Gaussian noise ($\alpha=2$). \textbf{Right:} heavy-tailed noise drawn from an $\alpha$-stable distribution with $\alpha=1.5$.}
    \label{fig:fullheavies}
\end{figure}

\subsection{Proof of Theorem~\ref{thm:refined-mse-bound}} \label{app:k1-proof}
In this appendix we prove Theorem~\ref{thm:refined-mse-bound}. The argument follows the Poisson-equation framework of \cite{chen2015convergence}; we restate the required notation so the appendix is self-contained.

\paragraph{Generator and Kolmogorov operators.}
Consider a continuous-time It\^o diffusion on $\mathbb R^d$ with infinitesimal generator
\begin{align}
\mathcal L g(\theta) = f(\theta)\cdot\nabla g(\theta) + D(\theta) : \nabla^2 g(\theta),
\end{align}

with $f$ the drift of the SDE in \eqref{eq:overdamped_general_sde}.
Let $(e^{t\mathcal L})_{t\ge 0}$ denote the associated Kolmogorov (backward) semigroup, so that for any suitable test function $g$,
\begin{align}
\mathbb E[g(\theta_t)\mid \theta_0=\theta] = (e^{t\mathcal L}g)(\theta).
\end{align}

Since $e^{t\mathcal L}$ is generally intractable, we consider a time-$\delta_t$ numerical update with one-step Markov operator $P_{\delta_t}^{(\mathcal L)}$ defined by
\begin{align}
\mathbb E[g(\theta_{n\delta_t})\mid \theta_{(n-1)\delta_t}] = (P_{\delta_t}^{(\mathcal L)}g)(\theta_{(n-1)\delta_t}).
\end{align}
A one-step scheme $P_{\delta_t}$ is a weak order-$K$ local integrator if, for all sufficiently smooth $g$, we have
\begin{align}
(P_{\delta_t}g)(\theta) = (e^{\delta_t\mathcal L}g)(\theta) + O(\delta_t^{K+1}).
\end{align}

\paragraph{Stochastic gradients and random one-step operators.}
The Euler-Maruyama discretization scheme is known to be weak order-one. We now examine the effect of using minibatches. In SGLD as well as SGLRW, the exact drift $f$ is replaced by a minibatch approximation $\widehat{f}$. Let $\widetilde P_{\delta_t}^{(n)}$ denote the resulting (random) one-step Markov operator at iteration $n$, i.e.
\begin{align}
\mathbb E[g(\theta_{n\delta_t})\mid \theta_{(n-1)\delta_t}] = (\widetilde P_{\delta_t}^{(n)} g)(\theta_{(n-1)\delta_t}).
\end{align}
For a given minibatch $\mathcal B$, define
\begin{align}
\zeta(\theta; \mathcal B) := f(\theta) - \widehat{f}(\theta;\mathcal{B}).
\end{align}
and the associated first-order differential operator
\begin{align}
(\Delta V_n g)(\theta ; \mathcal B) := \zeta(\theta;\mathcal B_n)\cdot\nabla g(\theta).
\end{align}
capturing the error due to the minibatch drift error. Note that for overdamped Langevin diffusion, we have $f = -\nabla U$, $\widehat{f}(\theta;\mathcal{B}) = -\widehat{\nabla U}(\theta;\mathcal B)$, so that $\zeta(\theta; \mathcal B) = \widehat{\nabla U}(\theta;\mathcal B) - \nabla U(\theta)$.

Beyond the first-order drift perturbation, minibatching can induce a second-order correction through the conditional second moment of the increment. We define the second-order differential operator
\begin{equation}\label{eq:app_defAn}
(\Delta A_n g)(\theta) := \frac12\,M_n(\theta):\nabla^2 g(\theta),
\end{equation}
with
\begin{equation}
M_n(\theta,\mathcal B_n) :=\delta_t^{-2}\, \mathbb{E}_{\xi_n}\!\big[  \Delta\theta_n^{\mathrm{fb}}(\Delta\theta_n^{\mathrm{fb}})^\top - \Delta\theta_n^{\mathrm{mb}}(\Delta\theta_n^{\mathrm{mb}})^\top \;\big|\; \theta_{(n-1)\delta_t}=\theta,\mathcal B_n \big].
\end{equation}
Here $\Delta\theta_n^{\mathrm{mb}}$ and $\Delta\theta_n^{\mathrm{fb}}$ denote the one-step increments of the minibatch and full-batch updates, respectively, and $\mathbb{E}_{\xi_n}[\cdot]$ denotes expectation with respect to the internal randomness of the update at step $n$ (e.g.\ injected Gaussian noise or lattice path sampling).

\paragraph{Poisson equation.}
Given a smooth observable $\phi:\mathbb R^d\to\mathbb R$, define its stationary expectation under the invariant distribution $\pi$ by
\begin{align}
\bar\phi := \int \phi(\theta)\,\pi(\theta)\, d\theta.
\end{align}
We analyze ergodic averages via the Poisson equation
\begin{align}
\mathcal L\psi = \phi-\bar\phi,
\end{align}
and express finite-time errors in terms of the corresponding solution $\psi$. We assume that, for the Poisson solution $\psi$, the remainder satisfies the Lyapunov-weighted bound, in the sense that there exists a constant $p_0>0$ such that
\begin{equation} \label{eq:Rpsi-bound}
|\mathcal R_n\psi(\theta)| \le C\,\mathcal V(\theta)^{p_0}, \qquad \text{uniformly in } n \text{ and } \delta_t\in(0,1].
\end{equation}

\paragraph{Lyapunov-Poisson regularity.}
\begin{assumption}
\label{ass:lyap}
Let $\psi$ solve the Poisson equation $\mathcal L\psi=\phi-\bar\phi$.
Assume there exists a function $\mathcal V:\mathbb R^d\to[1,\infty)$ such that:
\begin{enumerate}[(i)]
\item (\emph{Derivative control}) There exist constants $C_k,p_k>0$ for $k=0,1,2,3,4$ such that
\begin{equation}\label{eq:dercontrol}
\|D^k\psi(\theta)\|\le C_k\,\mathcal V(\theta)^{p_k}.
\end{equation}
\item (\emph{Uniform moments along the chain}) There exist constants $p^*$ such that for all $p\le p^\ast$,
\begin{equation}
\sup_{n}\mathbb E[\mathcal V(\theta_{n\delta_t})^p]<\infty.
\end{equation}
\item (\emph{Growth compatibility}) For the constant $p^\ast$ and some $C > 0$, for all $p\le p^\ast$ and all $s\in(0,1)$,
\begin{equation}
\mathcal V^p(s\theta+(1-s)\vartheta)\le C\bigl(\mathcal V^p(\theta)+\mathcal V^p(\vartheta)\bigr).
\end{equation}
\item (\emph{Uniform second-moment bound}) The second-order coefficient field satisfies
\begin{equation}
\sup_n \mathbb E\!\left[ \|M_n(\theta_{(n-1)\delta_t})\|_F^2 \right] <\infty.
\end{equation}
\item (\emph{Increment moment control}) There exist constants $C>0$ and exponents $q_2,q_4>0$ such that, for all $n$,
\begin{equation}\label{eq:inc-moment-control}
\mathbb{E}_{\xi_n, \mathcal B_n}\!\left[\|\Delta\theta_n\|^{2k} \,\middle|\, \theta_{(n-1)\delta_t}\right] \le C\,\delta_t^{k}\,\mathcal V(\theta_{(n-1)\delta_t})^{q_{2k}},
\qquad k\in\{1,2\},
\end{equation}
where $\Delta\theta_n:=\theta_{n\delta_t}-\theta_{(n-1)\delta_t}$.

\item (\emph{Third-moment tensor control}) 
There exist constants $C>0$ and an exponent $q_3>0$ such that, for all $n$,
\begin{equation}\label{eq:inc-third-tensor-diff}
\Big\| \mathbb{E}_{\xi_n, \mathcal B_n}\!\left[ (\Delta\theta_n^{\mathrm{mb}})^{\otimes 3} \;\middle|\; \theta_{(n-1)\delta_t} \right] - \mathbb{E}_{\xi_n}\!\left[ (\Delta\theta_n^{\mathrm{fb}})^{\otimes 3} \;\middle|\; \theta_{(n-1)\delta_t} \right] \Big\| \le C\,\delta_t^{2}\, \mathcal V(\theta_{(n-1)\delta_t})^{q_3}.
\end{equation}
\item (\emph{Unbiased minibatch estimator})
The minibatch drift estimator is conditionally unbiased, in the sense that
\begin{equation}
\mathbb E_{\mathcal B_n}\!\left[\zeta(\theta_{(n-1)\delta_t};\mathcal B_n)\,\middle|\,\theta_{(n-1)\delta_t}\right]=0
\qquad \text{a.s. for all } n,
\end{equation}
where $\zeta(\theta;\mathcal B):=f(\theta)-\widehat f(\theta;\mathcal B)$ denotes the minibatch drift error.
\end{enumerate}
\end{assumption}
Note that the derivative bounds \eqref{eq:dercontrol} in Assumption~\ref{ass:lyap}  can be verified by constructing a Lyapunov function $\mathcal V:\mathbb R^d\to[1,\infty)$ which tends to infinity as $\theta\to\infty$, is twice continuously differentiable with bounded second derivatives, and satisfies the following conditions, as shown in the Appendix~C of \cite{chen2015convergence}:
\begin{itemize}
\item[(a)] (\emph{Lyapunov drift condition}) There exist constants $\alpha,\beta>0$ such that the exact drift field $f$ satisfies
\begin{equation}
\langle \nabla \mathcal V(\theta), f(\theta) \rangle \le -\alpha\,\mathcal V(\theta)+\beta .
\end{equation}

\item[(b)] (\emph{Minibatch-induced drift fluctuations}) There exists $p_H\ge 2$ such that
\begin{equation}
\mathbb{E}_{\xi_n, \mathcal B_n}\!\left[ \|\zeta(\theta_{(n-1)\delta_t}; \mathcal B_n)\|^2 \;\middle|\; \theta_{(n-1)\delta_t}=\theta \right] \le C\,\mathcal V(\theta)^{p_H},
\end{equation}
together with the growth condition
\begin{equation}
\|\nabla\mathcal V(\theta)\|^2+\|f(\theta)\|^2 \le C\,\mathcal V(\theta).
\end{equation}
\end{itemize}
\begin{remark}
Throughout, we write $\theta_{(n-1)\delta_t}=\theta$ and recall that for overdamped Langevin diffusion we have $\zeta(\theta; \mathcal B) = \widehat{\nabla U}(\theta;\mathcal B) - \nabla U(\theta)$. Assumptions~\ref{ass:lyap}(v)–(vi) hold for SGLD and SGLRW for varying requirements on the minibatch noise $\zeta$.
\begin{enumerate}
\item \textbf{SGLD.}
\begin{itemize}
\item \emph{Second moment ($k=1$):} From $\Delta\theta_n^{\mathrm{mb}} =-\delta_t(\nabla U(\theta)+\zeta)+\sqrt{2\delta_t}\,\xi_n$, and using $\| a+b\|^2\leq 2\|a\|^2 + 2 \|b\|^2 $ we have
\begin{align}
\mathbb{E}_{\xi_n, \mathcal B_n} \bigl[\|\Delta\theta_n^{\mathrm{mb}}\|^2 \mid \theta\bigr] \le 2\delta_t^2\, \mathbb E_{\mathcal B_n} \bigl[\|\nabla U(\theta)+\zeta\|^2 \mid \theta \bigr] +4d\,\delta_t .
\end{align}
Thus, assuming
\begin{align}
\mathbb E_{\mathcal B_n} \bigl[\|\zeta\|^2 \mid \theta\bigr] \le C\,\mathcal V(\theta)^{q_2},
\qquad \|\nabla U(\theta)\|^2\le C\,\mathcal V(\theta),
\end{align}
yields
\begin{align}
\mathbb E_{\mathcal B_n,\varepsilon_n} \bigl[\|\Delta\theta_n^{\mathrm{mb}}\|^2 \mid \theta\bigr] =O(\delta_t).
\end{align}

\item \emph{Fourth moment control ($k=2$):} Similarly,
\begin{align}
\mathbb{E}_{\xi_n, \mathcal B_n} \bigl[\|\Delta\theta_n^{\mathrm{mb}}\|^4 \mid \theta\bigr] \le C\delta_t^4\, \mathbb E_{\mathcal B_n} \bigl[\|\nabla U(\theta)+\zeta\|^4 \mid \theta\bigr] +C\delta_t^2 .
\end{align}
Hence, if
\begin{align}
\mathbb E_{\mathcal B_n} \bigl[\|\zeta\|^4 \mid \theta\bigr] \le C\,\mathcal V(\theta)^{q_4},
\end{align}
then
\begin{align}
\mathbb{E}_{\xi_n, \mathcal B_n} \bigl[\|\Delta\theta_n^{\mathrm{mb}}\|^4 \mid \theta\bigr] =O(\delta_t^2).
\end{align}

\item \emph{Third-moment control:} From the expansion in \eqref{eq:app-sgld-third-moment}, the difference in \eqref{eq:inc-third-tensor-diff} arises from cubic drift noise interaction terms, and scales as $O(\delta_t^3)$. Assuming
\begin{align}
\mathbb E_{\mathcal B_n} \bigl[\|\zeta\|^3 \mid \theta\bigr] \le C\,\mathcal V(\theta)^{q_3},
\end{align}
this difference is bounded by $C\,\delta_t^2\,\mathcal V(\theta)^{q_3}$ for $\delta_t\in(0,1]$.
\end{itemize}

\item \textbf{SGLRW.}
\begin{itemize}
\item \emph{Increment control ($k=1,2$):} As $\|\Delta\theta_n^{\mathrm{mb}}\|^2=2d\,\delta_t$ and $\|\Delta\theta_n^{\mathrm{mb}}\|^4=4d^2\,\delta_t^2$, the bound \eqref{eq:inc-moment-control} holds without any assumption on $\zeta$.
\item \emph{Third-moment control:} As shown in \eqref{eq:app-sglrw-third-moment}, the third-order conditional moments are batch-independent when at least two of the three directions coincide.

For distinct indices $(i,j,k)$, the difference
is $O(\delta_t^3)$. Consequently, \eqref{eq:inc-third-tensor-diff} holds provided
\begin{align}
\mathbb E_{\mathcal B_n} \bigl[\|\zeta\|^3 \mid \theta\bigr] \le C\,\mathcal V(\theta)^{q_3}.
\end{align}
\end{itemize}
\end{enumerate}
\end{remark}

\iffalse
\begin{lemma}[Refined weak expansion] \label{lem:refined-weak-expansion}
Under Assumption~\ref{ass:lyap}, the random one-step Markov operator $\widetilde P_{\delta_t}^{(n)}$ associated with a weak order-one integrator admits the expansion
\begin{equation}
\widetilde P_{\delta_t}^{(n)}\psi(\theta) = \psi(\theta) + \delta_t(\mathcal L-
\Delta V_n)\psi(\theta) - \delta_t^2\,\Delta A_n\psi(\theta) + \delta_t^2\,\mathcal R_n\psi(\theta),
\end{equation}
where $\Delta V_n$ is the (minibatch-dependent) first-order drift perturbation, $\Delta A_n$ is the second-order covariance perturbation, and $|\mathcal R_n\psi(\theta)| \le C\,\mathcal V(\theta)^{p_0}$ uniformly in $n$ and $\delta_t\in(0,1]$.
\end{lemma}
\fi
In SGLD as well as SGLRW, the exact drift $f$ is replaced by a minibatch approximation $\widehat{f}$. Since the underlying Euler-Maruyama scheme is a weak order-one integrator, we can derive a refined weak expansion for the resulting random operator, explicitly capturing the drift and covariance perturbations. 
\begin{lemma}[Refined weak expansion] \label{lem:refined-weak-expansion}
Consider a numerical integrator that is weak order-one for the full batch dynamics. Let $\widetilde P_{\delta_t}^{(n)}$ denote the random operator obtained by replacing the exact drift with a mini-batch estimator, for which Assumption A.1 holds. Then $\widetilde P_{\delta_t}^{(n)}$ admits the expansion:
\begin{equation}
\widetilde P_{\delta_t}^{(n)}\psi(\theta) = \psi(\theta) + \delta_t(\mathcal L-
\Delta V_n)\psi(\theta) - \delta_t^2\,\Delta A_n\psi(\theta) + \delta_t^2\,\mathcal R_n\psi(\theta),
\end{equation}
where $\Delta V_n$ is the first-order drift perturbation, $\Delta A_n$ is the second-order covariance perturbation, and $|\mathcal R_n\psi(\theta)| \le C\,\mathcal V(\theta)^{p_0}$ uniformly in $n$ and $\delta_t\in(0,1]$.
\end{lemma}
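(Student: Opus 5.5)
The plan is to Taylor-expand $\psi$ around $\theta=\theta_{(n-1)\delta_t}$ to fourth order with integral remainder, take the conditional expectation over $(\xi_n,\mathcal B_n)$, and compare term by term with the same expansion for the full-batch increment. Writing $\Delta\theta=\Delta\theta_n^{\mathrm{mb}}$, we have
\begin{align}
\psi(\theta+\Delta\theta)=\psi(\theta)+\nabla\psi\cdot\Delta\theta+\tfrac12\nabla^2\psi:\Delta\theta\Delta\theta^\top+\tfrac16 D^3\psi[\Delta\theta^{\otimes3}]+\mathcal R^{(4)},
\end{align}
where $\mathcal R^{(4)}=\tfrac16\int_0^1(1-s)^3D^4\psi(\theta+s\Delta\theta)[\Delta\theta^{\otimes4}]\,ds$. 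The same expansion is applied to $\Delta\theta^{\mathrm{fb}}$. For the full-batch scheme, weak order one gives $P_{\delta_t}\psi=\psi+\delta_t\mathcal L\psi+O(\delta_t^2)$. We need a version of this with a Lyapunov-weighted constant, so we re-derive it from the same expansion rather than quote it.

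\textbf{First order.} For both SGLD and SGLRW, $\mathbb E_{\xi_n}[\Delta\theta^{\mathrm{mb}}\mid\theta,\mathcal B_n]=\delta_t\widehat f=\delta_t(f-\zeta)$. Hence the linear term equals the full-batch linear term minus $\delta_t\,\zeta\cdot\nabla\psi=\delta_t\Delta V_n\psi$. This holds conditionally on $\mathcal B_n$, which is what keeps $\widetilde P^{(n)}_{\delta_t}$ random.

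\textbf{Second order.} By the definition of $M_n$, the minibatch second-moment term equals the full-batch one minus $\tfrac12\delta_t^2 M_n:\nabla^2\psi=\delta_t^2\Delta A_n\psi$. Expanding the full-batch second moment, $\tfrac12\nabla^2\psi:\mathbb E[\Delta\theta^{\mathrm{fb}}\Delta\theta^{\mathrm{fb},\top}]=\delta_t D:\nabla^2\psi+\tfrac12\delta_t^2 ff^\top:\nabla^2\psi$ for SGLD. For SGLRW the $\delta_t^2$ part vanishes, since the second moment is exactly $2\delta_t I$. The $O(\delta_t)$ part combines with the first-order drift term to give $\delta_t\mathcal L\psi$. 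The leftover $\delta_t^2$ piece is put into $\mathcal R_n$; it is bounded by $C\|f\|^2\|D^2\psi\|\le C\mathcal V^{1+p_2}$ using Assumption~\ref{ass:lyap}(i) and the growth condition.

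\textbf{Third and fourth order.} Write the minibatch third-order term as the full-batch one plus the difference of third-moment tensors. The difference is at most $C\delta_t^2\mathcal V^{q_3}\|D^3\psi\|$ by Assumption~\ref{ass:lyap}(vi). The full-batch third moment is itself $O(\delta_t^2)$: in the Gaussian case the odd moments of the noise vanish, and the remaining terms carry at least one factor $\delta_t f$ times $\delta_t$. This gives a bound of the form $\delta_t^2 C\mathcal V^{p}$ via (i). For the fourth-order remainder, use Assumption~\ref{ass:lyap}(i) together with growth compatibility (iii) to bound $\|D^4\psi(\theta+s\Delta\theta)\|\le C(\mathcal V(\theta)^{p_4}+\mathcal V(\theta_{n\delta_t})^{p_4})$. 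Then apply Cauchy--Schwarz against the increment moment bound (v) with $k=2$, giving $O(\delta_t^2)$ times a power of $\mathcal V$.

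\textbf{Main obstacle.} The delicate step is this last one. The remainder is evaluated at an intermediate point, so Cauchy--Schwarz produces $\mathbb E[\mathcal V(\theta_{n\delta_t})^{2p_4}\mid\theta]^{1/2}$. This is a conditional moment at the next iterate, whereas (ii) only controls unconditional moments. I would first try absorbing it by reading \eqref{eq:Rpsi-bound} as holding in expectation along the chain, which is all that the subsequent Poisson-equation MSE argument uses. Failing that, I would bound $\mathcal V(\theta+\Delta\theta)$ by $\mathcal V(\theta)$ plus a polynomial in $\|\Delta\theta\|$, using that $\mathcal V$ has bounded second derivatives, and then control that polynomial with (v).

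Collecting the $\delta_t^2$ contributions into $\mathcal R_n\psi$, uniformly for $\delta_t\in(0,1]$, yields the stated expansion.
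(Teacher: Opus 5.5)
Your outline follows the paper's proof: the same fourth-order Taylor expansion with the batch held fixed, the linear term giving $\Delta V_n$, the quadratic term giving $\Delta A_n$ through the definition of $M_n$, the cubic term handled by adding and subtracting the full-batch third moment and using Assumption~\ref{ass:lyap}(vi), and $R_4$ handled through (i), (iii) and (v). The one structural difference is that you re-derive the full-batch $O(\delta_t^2)$ terms with Lyapunov weights instead of quoting weak order one. That is the more careful choice, since weak order one as defined carries no $\mathcal V$-weighted constant. However, it uses $\|f\|^2\le C\mathcal V$, which is among the paper's sufficient conditions (a)--(b) and not in Assumption~\ref{ass:lyap}, so state it as a hypothesis. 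One correction: the full-batch SGLRW second moment is $2\delta_t I+\delta_t^2\,\mathrm{offdiag}(ff^\top)$, not $2\delta_t I$, because only the diagonal is deterministic. The leftover $\delta_t^2$ piece is bounded exactly as for SGLD, so nothing breaks.

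The obstacle you identify is real, and the paper does not resolve it. It obtains $\mathbb E_{\xi_n}[|R_4|\mid\theta,\mathcal B_n]\le C\delta_t^2\mathcal V(\theta)^{p_0}$ by citing (ii), which controls only unconditional moments. Your Cauchy--Schwarz step does not close either: bounding $\mathbb E[\|\Delta\theta\|^4\,\mathcal V(\theta+\Delta\theta)^{p_4}]$ that way needs $\mathbb E\|\Delta\theta\|^8$, and (v) stops at fourth moments. Your first fix changes the statement of the lemma. It would suffice for Theorem~\ref{thm:app_refined-mse} only if you bound $\mathbb E[(\mathcal R_n\psi)^2]$, which is what $\alpha_5$ uses. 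Your second fix is the right one. Bounded $\nabla^2\mathcal V$ together with $\|\nabla\mathcal V\|^2\le C\mathcal V$ gives $\mathcal V(\theta+h)\le C(\mathcal V(\theta)+\|h\|^2)$; both properties again come from the paper's sufficient conditions, not from Assumption~\ref{ass:lyap}.
\begin{itemize}
\item For SGLRW this finishes the argument pointwise, with no expectation needed. Since $\|\Delta\theta\|^2=2d\delta_t$ surely, you get $|R_4|\le C\delta_t^2\mathcal V(\theta)^{p_4}$.
\item For SGLD you are left with $\mathbb E_{\xi_n}\|\Delta\theta\|^{4+2p_4}$, which (v) does not control. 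You therefore need moments of $\zeta$ of that order.
\end{itemize}

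Finally, you (like the paper) condition on $\mathcal B_n$ to keep $\Delta V_n$, yet (v) and (vi) are stated after averaging over $\mathcal B_n$. Conditionally on the batch, the third-moment difference contains genuine $O(\delta_t^2)$ terms linear in $\zeta$:
\begin{itemize}
\item for SGLRW, $\mathbb E[\Delta\theta_i^2\Delta\theta_k\mid\theta,\mathcal B_n]=-2\delta_t^2(\partial_kU+\zeta_k)$, against $-2\delta_t^2\partial_kU$ for the full batch;
\item for SGLD, the cross terms differ by $-2\delta_t^2\zeta_i\delta_{jk}$.
\end{itemize}
The pointwise bound on the random remainder $\mathcal R_n\psi$ therefore also needs $\|\zeta(\theta;\mathcal B)\|\le C\mathcal V(\theta)^{p}$ for every batch. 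This holds for a finite dataset whose per-example gradients grow at most polynomially in $\mathcal V$. Without it, you only get a bound on $\mathbb E_{\mathcal B_n}[(\mathcal R_n\psi)^2]$ under moment assumptions on $\zeta$, which is again all the MSE argument needs.
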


\begin{proof}
 By Taylor’s theorem with integral remainder applied to the minibatch increment,
\begin{align}
\psi(\theta+\Delta\theta_n^{\mathrm{mb}}) &= \psi(\theta) + \nabla\psi(\theta)\cdot\Delta\theta_n^{\mathrm{mb}} + \frac12\,\Delta\theta_n^{\mathrm{mb}} (\Delta\theta_n^{\mathrm{mb}})^\top :\nabla^2\psi(\theta) \nonumber\\
&\quad + \frac16\,\nabla^3\psi(\theta): (\Delta\theta_n^{\mathrm{mb}})^{\otimes 3} + R_4(\theta,\Delta\theta_n^{\mathrm{mb}}),
\end{align}
where
\begin{align}
R_4(\theta,\Delta\theta) = \frac16\int_0^1(1-s)^3 \nabla^4\psi(\theta+s\Delta\theta):\Delta\theta^{\otimes 4}\,ds.
\end{align}

For a given minibatch $\mathcal B_n$, taking the conditional expectation yields
\begin{align}
\widetilde P_{\delta_t}^{(n)}\psi(\theta) &= \psi(\theta) + \nabla\psi(\theta)\cdot \mathbb{E}_{\xi_n}[\Delta\theta_n^{\mathrm{mb}}\mid\theta] \nonumber + \frac12\, \mathbb{E}_{\xi_n}[\Delta\theta_n^{\mathrm{mb}} (\Delta\theta_n^{\mathrm{mb}})^\top\mid\theta] :\nabla^2\psi(\theta) \nonumber\\
&\quad + \frac16\,\nabla^3\psi(\theta): \mathbb{E}_{\xi_n}[(\Delta\theta_n^{\mathrm{mb}})^{\otimes 3}\mid\theta] + \mathbb{E}_{\xi_n}[R_4(\theta,\Delta\theta_n^{\mathrm{mb}})\mid\theta].
\end{align}

By definition of the increment-based drift difference,
\begin{align}
\zeta(\theta; \mathcal B_n) = f(\theta) - \widehat{f}(\theta;\mathcal{B}),
\end{align}
so
\begin{equation}
\nabla\psi(\theta)\cdot \mathbb{E}_{\xi_n}[\Delta\theta_n^{\mathrm{mb}}\mid\theta] = \delta _t\,\nabla \psi(\theta) \cdot \widehat{f}(\theta;\mathcal{B}) = \delta_t\,f(\theta)\cdot\nabla\psi(\theta) - \delta_t\,\Delta V_n\psi(\theta).
\end{equation}
Similarly, by definition of $\Delta A_n$,
\begin{align}
\frac12\, \mathbb{E}_{\xi_n}[\Delta\theta_n^{\mathrm{mb}} (\Delta\theta_n^{\mathrm{mb}})^\top\mid\theta] :\nabla^2\psi(\theta) &= \frac12\, \mathbb{E}_{\xi_n}[\Delta\theta_n^{\mathrm{fb}} (\Delta\theta_n^{\mathrm{fb}})^\top\mid\theta] :\nabla^2\psi(\theta)  - \delta_t^2\,\Delta A_n\psi(\theta).
\end{align}
We now treat the cubic term by adding and subtracting the full-batch third moment:
\begin{align}
\frac16\,\nabla^3\psi(\theta):
\mathbb{E}_{\xi_n}[(\Delta\theta_n^{\mathrm{mb}})^{\otimes 3}\mid\theta] &= \frac16\,\nabla^3\psi(\theta): \mathbb{E}_{\xi_n}[(\Delta\theta_n^{\mathrm{fb}})^{\otimes 3}\mid\theta] \nonumber\\
&\quad + \frac16\,\nabla^3\psi(\theta): \Big( \mathbb{E}_{\xi_n}[(\Delta\theta_n^{\mathrm{mb}})^{\otimes 3}\mid\theta] - \mathbb{E}_{\xi_n}[(\Delta\theta_n^{\mathrm{fb}})^{\otimes 3}\mid\theta] \Big).
\end{align}

Since the full-batch scheme is weak order one, its contribution is absorbed into the $O(\delta_t^2)$ remainder of the full-batch expansion. By Assumption~\ref{ass:lyap}(vi), the difference of third-order moments is $O(\delta_t^2\,\mathcal V(\theta)^{q_3})$, so the net cubic contribution is $O(\delta_t^2\,\mathcal V(\theta)^{p_0})$.

For the fourth-order remainder, using $|T:x^{\otimes 4}|\le \|T\|\,\|x\|^4$ and Assumption~\ref{ass:lyap}(i) with $k=4$, we obtain
\begin{align}
|R_4(\theta,\Delta\theta_n^{\mathrm{mb}})| \le C\,\|\Delta\theta_n^{\mathrm{mb}}\|^4 \int_0^1 \mathcal V(\theta+s\Delta\theta_n^{\mathrm{mb}})^{p_4}\,ds.
\end{align}
By Assumption~\ref{ass:lyap}(iii), $\mathcal V(\theta+s\Delta\theta_n^{\mathrm{mb}})^{p_4}\le C(\mathcal V(\theta)^{p_4}+\mathcal V(\theta+\Delta\theta_n^{\mathrm{mb}})^{p_4})$ for $s\in(0,1)$. Taking conditional expectation, applying the increment fourth-moment bound in Assumption~\ref{ass:lyap}(v), and using the uniform moment bound in Assumption~\ref{ass:lyap}(ii), yields
\begin{align}
\mathbb{E}_{\xi_n}[|R_4(\theta,\Delta\theta_n^{\mathrm{mb}})|\mid \theta, \mathcal B_n] \le C\,\delta_t^2\,\mathcal V(\theta)^{p_0},
\end{align}
uniformly in $n$ and $\delta_t\in(0,1]$.

Collecting all $O(\delta_t^2)$ contributions into $\mathcal R_n$ yields the stated expansion.
\end{proof}

\begin{theorem}\label{thm:app_refined-mse}
Under the Assumption~\ref{ass:lyap}, there exists $C>0$, independent of $(L,\delta_t)$, such that
\begin{equation}\label{eq:refined-mse}
\begin{aligned}
\mathbb{E}(\hat\phi-\bar\phi)^2 \le C\Bigg( &\frac{1}{L^2}\sum_{n=1}^L \mathbb{E}\big[\|\zeta_n\|^2\big] + \frac{1}{L\delta_t} + \frac{1}{L^2\delta_t^2} + \delta_t^2  + \frac{\delta_t^2}{L}\sum_{n=1}^L \mathbb{E}\big[\|M_n\|^2\big] \Bigg).
\end{aligned}
\end{equation}
\end{theorem}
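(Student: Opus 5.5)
We follow the Poisson-equation argument of Theorem~3 in \citet{chen2015convergence}, with Lemma~\ref{lem:refined-weak-expansion} substituted for their first-order expansion. Fix the Poisson solution $\psi$ with $\mathcal L\psi=\phi-\bar\phi$. Apply Lemma~\ref{lem:refined-weak-expansion} at $\theta=\theta_{(n-1)\delta_t}$ and sum over $n=1,\dots,L$. Using $\mathcal L\psi(\theta_{(n-1)\delta_t})=\phi(\theta_{(n-1)\delta_t})-\bar\phi$ and dividing by $L\delta_t$ gives the identity
\begin{align*}
\hat\phi-\bar\phi
={}& \frac{1}{L\delta_t}\big(\mathbb E[\psi(\theta_{L\delta_t})\mid\theta_{(L-1)\delta_t}]-\psi(\theta_0)\big)
+\frac{1}{L\delta_t}\sum_{n=1}^{L}\big(\psi(\theta_{n\delta_t})-\widetilde P^{(n)}_{\delta_t}\psi(\theta_{(n-1)\delta_t})\big)\\
&+\frac1L\sum_{n=1}^{L}\Delta V_n\psi(\theta_{(n-1)\delta_t})
+\frac{\delta_t}{L}\sum_{n=1}^{L}\Delta A_n\psi(\theta_{(n-1)\delta_t})
-\frac{\delta_t}{L}\sum_{n=1}^{L}\mathcal R_n\psi(\theta_{(n-1)\delta_t}),
\end{align*}
up to an index shift of one term, which is absorbed into the boundary term. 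We then square, apply $(\sum_{k=1}^5 a_k)^2\le 5\sum_{k=1}^5 a_k^2$, take expectations, and bound each of the five pieces separately.

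\emph{Boundary term.} By Assumption~\ref{ass:lyap}(i) with $k=0$ and (ii), $\mathbb E\psi^2\le C\sup_n\mathbb E\mathcal V^{2p_0}<\infty$. This term therefore contributes $C/(L^2\delta_t^2)$.

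\emph{Martingale term.} The summands $\psi(\theta_{n\delta_t})-\widetilde P^{(n)}\psi(\theta_{(n-1)\delta_t})$ are martingale differences, so their second moments add. Each is bounded via the mean value theorem: its second moment is at most $\mathbb E[\|\nabla\psi\|^2\|\Delta\theta_n\|^2]$, with the gradient evaluated at an intermediate point. Assumption~\ref{ass:lyap}(iii) controls that intermediate point, and (v) with $k=1$ gives a bound of $C\delta_t$. The term thus contributes $\frac{1}{L^2\delta_t^2}\cdot LC\delta_t=C/(L\delta_t)$.

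\emph{Drift term.} By Assumption~\ref{ass:lyap}(vii), $\Delta V_n\psi=\zeta_n\cdot\nabla\psi$ has conditional mean zero given the past, so these terms are also martingale differences. Cauchy--Schwarz together with (i) gives $\frac1{L^2}\sum_n\mathbb E\|\zeta_n\|^2$, with a Lyapunov weight absorbed into $C$. Absorbing it requires a Hölder step, and this is where the moment exponent $p^*$ has to be large enough.

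\emph{Remainder term.} Jensen's inequality and the bound \eqref{eq:Rpsi-bound} together with (ii) give a contribution of $C\delta_t^2$.

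\emph{Covariance term.} This is new relative to \citet{chen2015convergence}. We would not exploit any cancellation here. By Jensen, $\big(\frac{\delta_t}{L}\sum_n\Delta A_n\psi\big)^2\le\frac{\delta_t^2}{L}\sum_n(\Delta A_n\psi)^2$. Since $|\Delta A_n\psi|\le\frac12\|M_n\|_F\|\nabla^2\psi\|_F$, Cauchy--Schwarz in expectation combined with (i), (ii) and (iv) yields $C\frac{\delta_t^2}{L}\sum_n\mathbb E\|M_n\|_F^2$. The Lyapunov weight from $\nabla^2\psi$ again has to be absorbed into $C$, either by Hölder or by reading the bound as applying to $\mathcal V$-weighted norms.

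Summing the five contributions gives \eqref{eq:refined-mse}. The constant depends only on the $C_k$, the $p_k$, and the moment bounds, so it is independent of $(L,\delta_t)$.

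The main obstacle is the weight absorption in the drift and covariance terms. Assumption (iv) only controls $\mathbb E\|M_n\|_F^2$ without a weight, while $\nabla^2\psi$ carries a factor $\mathcal V^{p_2}$. To bridge this, we would first use Hölder with exponents $(r,r')$, pairing $\mathbb E[\|M_n\|_F^{2r}]^{1/r}$ against $\mathbb E[\mathcal V^{2p_2r'}]^{1/r'}$. If that proves too strong, we would instead accept a weakened statement in which $\|M_n\|_F$ is weighted by $\mathcal V^{p_2}$.

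With \eqref{eq:refined-mse} in hand, Theorem~\ref{thm:refined-mse-bound} follows directly. The first four error terms involve only $\zeta_n$, $L$ and $\delta_t$, and these are identical for SGLD and SGLRW. The covariance term compares as follows: by Lemma~\ref{lem:mse-bound}, $\|M^{\rm SGLRW}\|_F^2=\|M^{\rm SGLD}\|_F^2-\sum_i(2\partial_iU\zeta_i+\zeta_i^2)^2$. This gives the inequality, and it is strict whenever some diagonal entry $2\partial_iU\zeta_i+\zeta_i^2$ is nonzero with positive probability.
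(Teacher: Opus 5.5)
Your proposal follows the paper's proof essentially step for step. It uses the same Poisson-equation decomposition obtained from Lemma~\ref{lem:refined-weak-expansion} and the same $(a+b+c+d+e)^2\le 5(a^2+\dots+e^2)$ split. It also bounds each of the five pieces the same way: Lyapunov moments for the boundary term; martingale differences, the fundamental theorem of calculus and Assumption~\ref{ass:lyap}(v) for the $1/(L\delta_t)$ term; unbiasedness for the $\zeta_n$ term; Jensen plus Hilbert--Schmidt for the $M_n$ term; and the remainder bound for the $\delta_t^2$ term.

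The weight-absorption obstacle you flag is genuine, and the paper does not resolve it either: it simply absorbs $\|\nabla\psi\|^2$ and $\|\nabla^2\psi\|_F^2$ into $C$ ``using the Lyapunov moment bounds''. H\"older alone cannot fix this. It returns $(\mathbb E\|\zeta_n\|^{2r})^{1/r}$ and $(\mathbb E\|M_n\|_F^{2r})^{1/r}$ with $r>1$ rather than the unweighted second moments, however large $p^\ast$ is. So your fallback — reading $\|\zeta_n\|$ and $\|M_n\|$ as $\mathcal V$-weighted norms — is what the statement as written actually needs, and this applies to the drift term as well as the covariance term.
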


\begin{proof}
First write 
$\hat\phi-\bar\phi ={1\over L}\sum_{n=1}^{L} {\mathcal L}\psi(\theta_{n\delta_t})$.  
Next, use Lemma~\ref{lem:refined-weak-expansion} to rewrite ${\mathcal L}\psi(\theta)$, and using $\mathbb{E}_{\xi_n}[\psi(\theta_{n\delta_t})\mid \theta_{(n-1)\delta_t}, \mathcal B_n] =(\widetilde P_{\delta_t}^{(n)}\psi)(\theta_{(n-1)\delta_t})$, we obtain
\begin{align}\label{eq:decomposition-refined}
\hat\phi-\bar\phi &= \frac{1}{L\delta_t}\Big(\mathbb \psi(\theta_{L\delta_t})-\psi(\theta_{\delta_t})\Big)-\frac{1}{L\delta_t}\sum_{n=1}^{L}\Big(\mathbb{E}_{\xi_n}[\psi(\theta_{n\delta_t}) \mid \theta_{(n-1)\delta_t}]-\psi(\theta_{n\delta_t})\Big) \nonumber\\
&\quad +\frac{1}{L}\sum_{n=1}^{L}\Delta V_n\psi(\theta_{(n-1)\delta_t}) +\frac{\delta_t}{L}\sum_{n=1}^{L}\Delta A_n\psi(\theta_{(n-1)\delta_t}) -\frac{\delta_t}{L}\sum_{n=1}^{L}\mathcal R_n\psi(\theta_{(n-1)\delta_t}).
\end{align}
Taking squares and using $(a+b+c+d+e)^2\le 5(a^2+b^2+c^2+d^2+e^2)$ gives
\begin{align}
\mathbb E(\hat\phi-\bar\phi)^2 \le C\,(\alpha_1+\alpha_2+\alpha_3+\alpha_4+\alpha_5),
\end{align}
where
\begin{align}
\alpha_1&:=\mathbb E\!\left[\left(\frac{\psi(\theta_{L\delta_t})-\psi(\theta_{\delta_t})}{L\delta_t}\right)^2\right],
\\
\alpha_2&:=\mathbb E\!\left[\frac{1}{L^2\delta_t^2}\left(\sum_{n=1}^L\big(\mathbb{E}_{\xi_n}[\psi(\theta_{n\delta_t})\mid \theta_{(n-1)\delta_t}] - \psi(\theta_{n\delta_t})\big)\right)^2\right],
\\
\alpha_3&:=\mathbb E\!\left[\frac{1}{L^2}\left(\sum_{n=1}^L \Delta V_n\psi(\theta_{(n-1)\delta_t})\right)^2\right],
\\
\alpha_4&:=\mathbb E\!\left[\left(\frac{\delta_t}{L}\sum_{n=1}^L \Delta A_n\psi(\theta_{(n-1)\delta_t})\right)^2\right],
\\
\alpha_5&:=\mathbb E\!\left[\left(\frac{\delta_t}{L}\sum_{n=1}^L \mathcal R_n\psi(\theta_{(n-1)\delta_t})\right)^2\right].
\end{align}
\begin{enumerate}
\item[\textbf{(i)}] \textbf{$\alpha_1$.}
By Lyapunov control of $\psi$ (Assumption~\ref{ass:lyap}(i--ii) with $k=0$), $\sup_n \mathbb E[\psi(\theta_{n\delta_t})^2]<\infty$, hence
\begin{align}
\alpha_1 \le \frac{C}{L^2\delta_t^2}.
\end{align}
\item[\textbf{(ii)}] \textbf{$\alpha_2$.}
Let
\begin{align}
Z_n := \mathbb{E}_{\xi_n}[\psi(\theta_{n\delta_t})\mid \theta_{(n-1)\delta_t}] - \psi(\theta_{n\delta_t}),
\end{align}
so that $\{Z_n\}_{n\ge1}$ is a martingale difference sequence and hence
\begin{align}
\alpha_2 = \frac{1}{L^2\delta_t^2} \sum_{n=1}^L \mathbb{E}\!\left[ \mathrm{Var}(\psi(\theta_{n\delta_t})\mid \theta_{(n-1)\delta_t}) \right].
\end{align}

Write $\theta:=\theta_{(n-1)\delta_t}$ and $\Delta\theta_n:=\theta_{n\delta_t}-\theta$. By the fundamental theorem of calculus,
\begin{equation}
\psi(\theta+\Delta\theta_n)-\psi(\theta) =\Delta\theta_n\cdot \int_0^1 \nabla\psi(\theta+s\Delta\theta_n)\,ds.
\end{equation}
Therefore,
\begin{align}
\mathrm{Var}(\psi(\theta_{n\delta_t})\mid\theta)
=\mathrm{Var}(\psi(\theta_{n\delta_t})-\psi(\theta)\mid\theta)
&\le \mathbb{E}\big[(\psi(\theta+\Delta\theta_n)-\psi(\theta))^2\mid\theta \big] \nonumber\\
&\le \mathbb{E}\left[ \|\Delta\theta_n\|^2 \int_0^1 \|\nabla\psi(\theta+s\Delta\theta_n)\|^2\,ds \;\middle|\;\theta \right],
\end{align}
where we used Cauchy-Schwarz.

By Assumption~\ref{ass:lyap}(i), $\|\nabla\psi(x)\|^2\le C\,\mathcal V(x)^{2p_1}$, and by Assumption~\ref{ass:lyap}(iii),
\begin{align}
\mathcal V(\theta+s\Delta\theta_n)^{2p_1} \le C\big(\mathcal V(\theta)^{2p_1}+\mathcal V(\theta+\Delta\theta_n)^{2p_1}\big),
\qquad s\in(0,1).
\end{align}
Hence,
\begin{align}
\mathrm{Var}(\psi(\theta_{n\delta_t})\mid\theta) \le C\,\mathbb{E}\Big[\|\Delta\theta_n\|^2 \big(\mathcal V(\theta)^{2p_1}+\mathcal V(\theta_{n\delta_t})^{2p_1}\big) \mid \theta \Big].
\end{align}

Taking conditional expectation with respect to $\mathcal B_n$ and using the tower property yields
\begin{align}
\mathbb{E}\!\left[ \mathrm{Var}(\psi(\theta_{n\delta_t})\mid\theta) \;\middle|\; \theta \right] \le C\,\mathbb{E}\Big[\|\Delta\theta_n\|^2 \big(\mathcal V(\theta)^{2p_1}+\mathcal V(\theta_{n\delta_t})^{2p_1}\big) \mid \theta \Big].
\end{align}

By Assumption~\ref{ass:lyap}(v), the increment satisfies the conditional second-moment bound
\begin{equation}\label{eq:inc-second-moment}
\mathbb{E}\big[\|\Delta\theta_n\|^2 \mid \theta_{(n-1)\delta_t}\big] \le C\,\delta_t\,\mathcal V(\theta_{(n-1)\delta_t})^{q},
\end{equation}
for some $q>0$. Taking total expectations and using Assumption~\ref{ass:lyap}(ii) to control moments of $\mathcal V(\theta_{n\delta_t})$ yields
\begin{equation}
\mathbb{E}\!\left[ \mathrm{Var}(\psi(\theta_{n\delta_t})\mid \theta_{(n-1)\delta_t}, \mathcal B_n) \right] \le C\,\delta_t.
\end{equation}
Substituting this bound into the definition of $\alpha_2$ gives
\begin{equation}
\alpha_2 \le \frac{1}{L^2\delta_t^2}\sum_{n=1}^L C\,\delta_t = \frac{C}{L\delta_t}.
\end{equation}
\item[\textbf{(iii)}] \textbf{$\alpha_3$.}
Set $X_n := \Delta V_n\psi(\theta_{(n-1)\delta_t})$. By the assumed unbiasedness we have $\mathbb E[X_n\mid \theta_{(n-1)\delta_t}]=0$, so $\{X_n\}$ is a martingale difference sequence and therefore cross-terms vanish:
\begin{align}
\mathbb E\Big[\Big(\sum_{n=1}^L X_n\Big)^2\Big] = \sum_{n=1}^L \mathbb E[X_n^2].
\end{align}
Moreover,
\begin{align}
|X_n| = |\zeta(\theta_{(n-1)\delta_t}; \mathcal B_n)\cdot\nabla\psi(\theta_{(n-1)\delta_t})| \le \|\zeta(\theta_{(n-1)\delta_t}; \mathcal B_n)\|\,\|\nabla\psi(\theta_{(n-1)\delta_t})\|,
\end{align}
so by Assumption~\ref{ass:lyap}(i-ii),
\begin{align}
\mathbb E[X_n^2] \le C\,\mathbb E\!\left[\|\zeta(\theta_{(n-1)\delta_t}; \mathcal B_n)\| ^2\right],
\end{align}
absorbing $\|\nabla\psi\|^2$ into the constant using the Lyapunov moment bounds. Hence
\begin{align}
\alpha_3 = \frac{1}{L^2}\sum_{n=1}^L \mathbb E[X_n^2] \le \frac{C}{L^2}\sum_{n=1}^L \mathbb E\!\left[\|\zeta(\theta_{(n-1)\delta_t}; \mathcal B_n)\|^2\right].
\end{align}
\item[\textbf{(iv)}] \textbf{$\alpha_4$.}
By Cauchy-Schwarz,
\begin{align}
\alpha_4 \le \frac{\delta_t^2}{L}\sum_{n=1}^L \mathbb E\!\left[(\Delta A_n\psi(\theta_{(n-1)\delta_t}))^2\right].
\end{align}
Using $\Delta A_n f = \frac12\,M_n:\nabla^2 f$ and the Hilbert-Schmidt inequality,
\begin{align}
|\Delta A_n\psi(\theta)| \le \frac12\,\|M_n(\theta)\|_F\,\|\nabla^2\psi(\theta)\|_F \le C\,\| M_n(\theta)\|\,\mathcal V(\theta)^{p_2},
\end{align}
where we used Assumption~\ref{ass:lyap}(i) with $k=2$. Taking expectations and using Assumptions~\ref{ass:lyap}(ii, iv),
we obtain
\begin{align}
\mathbb E\!\left[(\Delta A_n\psi(\theta_{(n-1)\delta_t}))^2\right] \le C\,\mathbb E\!\left[\|M_n(\theta_{(n-1)\delta_t}; \mathcal B_n)\|^2\right].
\end{align}
Therefore
\begin{align}
\alpha_4 \le \frac{C\,\delta_t^2}{L}\sum_{n=1}^L \mathbb E\!\left[\|M_n(\theta_{(n-1)\delta_t}; \mathcal B_n)\|^2\right].
\end{align}

\item[\textbf{(v)}] \textbf{$\alpha_5$.}
By Cauchy-Schwarz and the remainder bound from Lemma~\ref{lem:refined-weak-expansion},
\begin{align}
\alpha_5 \le \frac{\delta_t^2}{L}\sum_{n=1}^L \mathbb E\!\left[(\mathcal R_n\psi(\theta_{(n-1)\delta_t}))^2\right] \le \frac{C\,\delta_t^2}{L}\sum_{n=1}^L \mathbb E\!\left[\mathcal V(\theta_{(n-1)\delta_t})^{2p_0}\right] \le C\,\delta_t^2,
\end{align}
using Assumption~\ref{ass:lyap}(ii).
\end{enumerate}
Combining the bounds on $\alpha_1,\dots,\alpha_5$ yields \eqref{eq:refined-mse}.
\end{proof}

\subsection{Conditional Covariance and Moments of SGLD and SGLRW Updates}
\label{app:covariance-derivation}

We decompose the stochastic gradient as
\begin{align}
\widehat{\nabla U}(\theta; \mathcal B)
= \nabla U(\theta) + \zeta(\theta; \mathcal B), 
\qquad
\mathbb{E}_{\mathcal B}[\zeta(\theta; \mathcal B) \mid \theta] = 0,
\qquad
\mathrm{Cov}_{\mathcal B}[\zeta(\theta; \mathcal B)\mid \theta] = G(\theta),
\label{eq:app-stochgrad}
\end{align}
where $G(\theta)$ quantifies the minibatch-induced gradient covariance.

Recall the definition the second-order minibatch contribution $M_n$ used in
Theorem~\ref{thm:app_refined-mse}:
\begin{equation}
M_n(\theta,\mathcal B_n)
:= \delta_t^{-2}\,
\mathbb{E}_{\xi_n}\!\left[
\Delta\theta_n^{\mathrm{mb}}(\Delta\theta_n^{\mathrm{mb}})^\top
-
\Delta\theta_n^{\mathrm{fb}}(\Delta\theta_n^{\mathrm{fb}})^\top
\;\middle|\;
\theta_{(n-1)\delta_t}=\theta,\mathcal B_n
\right],
\end{equation}
where $\varepsilon_n$ denotes the internal randomness of the integrator.

\paragraph{SGLD.}
The standard SGLD update is $\Delta\theta_t = -\delta_t(\nabla U(\theta_t) + \zeta_t) + \sqrt{2\delta_t}\xi_t$ with $\xi_t \sim \mathcal{N}(0, I)$. Using the independence of $\xi_t$ and $\zeta_t$, and $\mathbb{E}[\zeta_t \mid \theta_t] = 0$, the first and second moments are:
\begin{align}
\mathbb{E}[\Delta\theta_t \mid \theta_t] &= -\delta_t\nabla U(\theta_t), \\
\mathbb{E}[\Delta\theta_t \Delta\theta_t^\top \mid \theta_t] &= 2\delta_t I + \delta_t^2\left( \nabla U(\theta_t)\nabla U(\theta_t)^\top + G(\theta_t) \right).
\end{align}

We next derive the second-order minibatch contribution $M_n$. Fix $\theta$ and a minibatch $\mathcal B$, define the full-batch and minibatch increments (sharing the same Gaussian $\xi$)
\begin{equation}
\Delta\theta^{\mathrm{fb}} = -\delta_t \nabla U(\theta) + \sqrt{2\delta_t}\,\xi,
\qquad
\Delta\theta^{\mathrm{mb}} = -\delta_t(\nabla U(\theta)+\zeta) + \sqrt{2\delta_t}\,\xi
= \Delta\theta^{\mathrm{fb}}-\delta_t\zeta.
\end{equation}
Expanding the outer products gives
\begin{align}
\Delta\theta^{\mathrm{mb}}\Delta\theta^{\mathrm{mb}\top}
-\Delta\theta^{\mathrm{fb}}\Delta\theta^{\mathrm{fb}\top}
&=
-\delta_t\,\Delta\theta^{\mathrm{fb}}\zeta^\top
-\delta_t\,\zeta(\Delta\theta^{\mathrm{fb}})^\top
+\delta_t^2\,\zeta\zeta^\top.
\end{align}
Taking conditional expectation over the internal randomness $\xi$ and using
$\mathbb E_\xi[\Delta\theta^{\mathrm{fb}}\mid\theta]=-\delta_t \nabla U(\theta)$ yields
\begin{equation}
\mathbb E_\xi\!\left[
\Delta\theta^{\mathrm{mb}}\Delta\theta^{\mathrm{mb}\top}
-\Delta\theta^{\mathrm{fb}}\Delta\theta^{\mathrm{fb}\top}
\ \middle|\ \theta,\mathcal B\right]
=
\delta_t^2\Big(\zeta\zeta^\top + \nabla U(\theta)\zeta^\top + \zeta \nabla U(\theta)^\top\Big).
\end{equation}
Therefore, with $M_n$ defined as in \eqref{eq:app_defAn},
\begin{equation}
M_{n,\mathrm{SGLD}}(\theta;\mathcal B)
=
\zeta(\theta;\mathcal B)\zeta(\theta;\mathcal B)^\top
+\nabla U(\theta)\,\zeta(\theta;\mathcal B)^\top
+\zeta(\theta;\mathcal B)\,\nabla U(\theta)^\top.
\label{eq:Mn-sgld}
\end{equation}
Averaging additionally over minibatches gives
$\mathbb E_{\mathcal B}[M_{n,\mathrm{SGLD}}(\theta,\mathcal B)\mid\theta]=G(\theta)$.

For the third moment, let $u_t = -\delta_t(\nabla U + \zeta_t)$ and $w_t = \sqrt{2\delta_t}\xi_t$. Expanding $\mathbb{E}[(u_t + w_t)^{\otimes 3} \mid \theta_t]$, terms with odd powers of $\xi_t$ vanish. The remaining terms are $\mathbb{E}[u_t^{\otimes 3}]$ and the cross-terms $\mathbb{E}[u_t \otimes w_t \otimes w_t]$ (and permutations).
The leading $O(\delta_t^2)$ error comes from the cross-terms:
\begin{align*}
\mathbb{E}[u_{t,i} w_{t,j} w_{t,k} \mid \theta_t] &= \mathbb{E}[-\delta_t(\partial_i U + \zeta_i) \cdot 2\delta_t \delta_{jk}] = -2\delta_t^2 \partial_i U \delta_{jk}.
\end{align*}
Because $\mathbb{E}[\zeta_i]=0$, the $O(\delta_t^2)$ part of the tensor is identical for full-batch and minibatch schemes. The third-order moments of the noise appear in the $\mathbb{E}[u_t^{\otimes 3}]$ expansion. By expanding the cubic terms and using $\mathbb{E}[\zeta_t]=0$, the tensor entries decompose as follows:

\begin{equation}
\label{eq:app-sgld-third-moment}
\mathbb E\!\left[\Delta\theta_i\,\Delta\theta_j\,\Delta\theta_k \mid \theta_t \right]
=
\begin{cases}
-6\delta_t^2\,\partial_i U
-\delta_t^3\Big(
(\partial_i U)^3
+3\,\partial_i U\,G_{ii}
+\mathbb E[\zeta_i^3\mid\theta_t]
\Big),
& i=j=k,\\[6pt]
-2\delta_t^2\,\partial_k U
-\delta_t^3\Big(
(\partial_i U)^2\,\partial_k U
+\partial_k U\,G_{ii}
+2\,\partial_i U\,G_{ik}
+\mathbb E[\zeta_i^2\zeta_k\mid\theta_t]
\Big),
& i=j\neq k,\\[6pt]
-\delta_t^3\Big(
\partial_i U\,\partial_j U\,\partial_k U
+\sum_{\text{cyc}(i,j,k)} \partial_i U G_{jk}
+\mathbb E[\zeta_i\zeta_j\zeta_k\mid\theta_t]
\Big),
& i,j,k\ \text{all distinct},
\end{cases}
\end{equation}
and permutations.

\paragraph{SGLRW.}
For the lattice random walk (LRW) discretisation, each coordinate $i$
takes a binary step $\Delta\theta_{t,i}\in\{\pm\sqrt{2\delta_t}\}$ with probabilities
\begin{align}
\mathbb{P}\!\left[\Delta\theta_{t,i}=\sqrt{2\delta_t}\,\middle|\,\theta_t,\zeta_t\right]
&= \tfrac{1}{2}
  - \tfrac{1}{2}\sqrt{\tfrac{\delta_t}{2}}\,[\partial_i U(\theta_t)+\zeta_{t,i}],\\
\mathbb{P}\!\left[\Delta\theta_{t,i}=-\sqrt{2\delta_t}\,\middle|\,\theta_t,\zeta_t\right]
&= \tfrac{1}{2}
  + \tfrac{1}{2}\sqrt{\tfrac{\delta_t}{2}}\,[\partial_i U(\theta_t)+\zeta_{t,i}].
\label{eq:app-sglrw-prob}
\end{align}
Conditionally on $(\theta_t,\zeta_t)$ the coordinates are independent.
A short calculation gives
\begin{align}
\mathbb{E}[\Delta\theta_t \mid \theta_t,\zeta_t] &= -\delta_t(\nabla U(\theta_t)+\zeta_t), 
\label{eq:app-sglrw-mean}\\
\mathbb{E}[\Delta\theta_t^2 \mid \theta_t,\zeta_t] &= 2\delta_tI. 
\end{align}
Averaging over $\zeta_t$ yields
$\mathbb{E}[\Delta\theta_t \mid \theta_t] = -\delta_t\nabla U(\theta_t)$
and $\mathbb{E}[\Delta\theta_{t,i}^2 \mid \theta_t] = 2\delta_t$.
For off–diagonal elements $(i\neq j)$,
\begin{align}
\mathbb{E}[\Delta\theta_{t,i}\Delta\theta_{t,j} \mid \theta_t]
 = \mathbb{E}[\mathbb{E}[\Delta\theta_{t,i} \mid \zeta_t]\mathbb{E}[\Delta\theta_{t,j} \mid \zeta_t] \mid \theta_t] = \delta_t^2\big[\partial_i U(\theta_t)\partial_j U(\theta_t) + G_{ij}(\theta_t)\big],
\label{eq:app-sglrw-cross}
\end{align}
hence we have
\begin{align}
\mathbb{E}[\Delta\theta_t \Delta\theta_t^\top \mid \theta_t]
=
2\delta_t I
+
\delta_t^2\,\mathrm{offdiag}\!\left(
\nabla U(\theta_t)\nabla U(\theta_t)^\top
+
G(\theta_t)
\right).
\label{eq:app-sglrw-second-moment}
\end{align}

We next compute the second-order minibatch contribution $M_n$.
Fix $\theta$ and a minibatch $\mathcal B$. For SGLRW, since $\Delta\theta_i^2\equiv 2\delta_t$ deterministically, the diagonal entries of
$\mathbb E_{\varepsilon}[\Delta\theta\Delta\theta^\top\mid\theta,\mathcal B]$ coincide with their full-batch counterparts, and therefore
\begin{equation}
\big(M_{n,\mathrm{SGLRW}}(\theta,\mathcal B)\big)_{ii}=0,\qquad i=1,\dots,d.
\label{eq:Mn-sglrw-diag}
\end{equation}
For $i\neq j$, conditional independence (given $(\theta,\mathcal B)$) implies
\begin{align}
\mathbb{E}_{\varepsilon}[\Delta\theta_i\Delta\theta_j \mid \theta,\mathcal B]
&=
\mathbb{E}_{\varepsilon}[\Delta\theta_i \mid \theta,\mathcal B]\,
\mathbb{E}_{\varepsilon}[\Delta\theta_j \mid \theta,\mathcal B] \nonumber\\
&=
\delta_t^2\big(\partial_i U(\theta)+\zeta_i\big)\big(\partial_j U(\theta)+\zeta_j\big),
\label{eq:app-sglrw-cross-cond}
\end{align}
where we used \eqref{eq:app-sglrw-mean}. The corresponding full-batch term is
$\mathbb{E}_{\varepsilon}[\Delta\theta_i^{\mathrm{fb}}\Delta\theta_j^{\mathrm{fb}}\mid\theta]
= \delta_t^2\,\partial_i U(\theta)\partial_j U(\theta)$ for $i\neq j$.
Subtracting and rescaling therefore gives, for $i\neq j$,
\begin{equation}
\big(M_{n,\mathrm{SGLRW}}(\theta;\mathcal B)\big)_{ij}
=
\partial_i U(\theta)\,\zeta_j
+\zeta_i\,\partial_j U(\theta)
+\zeta_i\zeta_j.
\label{eq:Mn-sglrw-offdiag-entry}
\end{equation}
Equivalently,
\begin{equation}
M_{n,\mathrm{SGLRW}}(\theta;\mathcal B)
=
\mathrm{offdiag}\!\left(
\zeta\zeta^\top + \nabla U(\theta)\zeta^\top + \zeta \nabla U(\theta)^\top
\right)
=
\mathrm{offdiag}\!\big(M_{n,\mathrm{SGLD}}(\theta;\mathcal B)\big).
\label{eq:Mn-sglrw}
\end{equation}
Averaging additionally over minibatches gives
$\mathbb E_{\mathcal B}[M_{n,\mathrm{SGLRW}}(\theta,\mathcal B)\mid\theta]=\mathrm{offdiag}\!\left(G(\theta)\right)$.

Since  $\Delta\theta_{t,i}\in\{\pm\sqrt{2\delta_t}\}$, we have the identity $\Delta\theta_{t,i}^3 = 2\delta_t\,\Delta\theta_{t,i}$.
Higher-order moments are computed via the law of iterated expectations. For the case where at least two indices are equal:
\begin{align}
\mathbb E[\Delta\theta_{t,i}^3 \mid \theta_t] &= 2\delta_t \mathbb E[\Delta\theta_{t,i}\mid \theta_t] = -2\delta_t^2\,\partial_i U(\theta_t),\\
\mathbb{E}[\Delta\theta_i^2 \Delta\theta_k \mid \theta_t] &= \mathbb{E}[2\delta_t \cdot \mathbb{E}[\Delta\theta_k \mid \theta_t, \zeta_t] \mid \theta_t] = \mathbb{E}[2\delta_t (-\delta_t(\partial_k U + \zeta_k))] = -2\delta_t^2 \partial_k U(\theta_t).
\end{align}
For fully distinct indices, the coordinates are coupled by the noise $\zeta_t$:
\begin{align}
\mathbb{E}[\Delta\theta_i \Delta\theta_j \Delta\theta_k \mid \theta_t] &= \mathbb{E}[ (-\delta_t(\partial_i U + \zeta_i))(-\delta_t(\partial_j U + \zeta_j))(-\delta_t(\partial_k U + \zeta_k)) \mid \theta_t] \nonumber \\
&= -\delta_t^3 \left( \partial_i U \partial_j U \partial_k U + \sum_{\text{cyc}(i,j,k)} \partial_i U G_{jk} + \mathbb{E}[\zeta_i \zeta_j \zeta_k \mid \theta_t] \right).
\end{align}
The resulting third-moment tensor entries are:
\begin{equation}
\mathbb E[\Delta\theta_{t,i}\Delta\theta_{t,j}\Delta\theta_{t,k}\mid \theta_t]
=
\begin{cases}
-2\delta_t^2\,\partial_i U(\theta_t), & i=j=k,\\
-2\delta_t^2\,\partial_k U(\theta_t), & i=j\neq k,\\
-\delta_t^3 \left( \partial_i U \partial_j U \partial_k U + \sum_{\text{cyc}(i,j,k)} \partial_i U G_{jk} + \mathbb{E}[\zeta_i \zeta_j \zeta_k \mid \theta_t] \right), & i,j,k \text{ all distinct},
\end{cases}
\label{eq:app-sglrw-third-moment}
\end{equation} 
and permutations.

\subsection{Full-Increment Covariance Analysis.}
\label{app:full-increment-clipping-limit}
We analyze \emph{full-increment clipping} for (SG)LD and show that it yields an incorrect diffusion limit.

We consider the SGLD increment
\begin{align}
\Delta\theta_t
= -\delta_t\,\widehat{\nabla U}(\theta_t)
  + \sqrt{2\delta_t}\,\xi_t,
\qquad \xi_t\sim\mathcal N(0,I),
\end{align}
and set the clipping radius $R_t=\sqrt{2\delta_t}$.
Introduce the rescaled increment
\begin{equation}
Z_t
:=\frac{\Delta\theta_t}{\sqrt{2\delta_t}}
=\xi_t-\sqrt{\frac{\delta_t}{2}}\;\widehat{\nabla U}(\theta_t).
\label{eq:Zt_def}
\end{equation}
Then we define the clipped increment by $\widetilde{\Delta\theta}_t := \sqrt{2\delta_t}\,\text{clip}(Z_t;1)$, where $\mathrm{clip}(x;R)_i = \mathrm{sign}(x_i)\min\{|x_i|,R\}$. Since $\text{clip}$ is bounded and Lipschitz, and since
$Z_t\to\xi$ in distribution (and in second moment) conditionally on $\theta_t$ whenever $\widehat{\nabla U}(\theta_t)$ has finite second moment, we obtain
\begin{align}
\lim_{\delta_t\to0}\frac{1}{\delta_t}\operatorname{Cov}\!\big[\widetilde{\Delta\theta}_t\mid\theta_t\big]
&=
\lim_{\delta_t\to0}2\,\operatorname{Cov}\!\big(\text{clip}(Z_t;1)\mid\theta_t\big)
=
2\,\operatorname{Cov}\!\big(\text{clip}(\xi;1)\big).
\label{eq:cov_limit_general}
\end{align}
Thus the diffusion limit is determined by the covariance of $\text{clip}(\xi;1)$. By independence across coordinates, this yields
\begin{equation}
\operatorname{Cov}\!\big(\text{clip}(\xi;1)\big)=sI,
\qquad
s=\mathbb E[\min(\xi_1^2,1)]
=1-\sqrt{\frac{2}{\pi}}e^{-1/2}\approx0.516,
\end{equation}
which can be shown by explicitly computing $\mathbb E[\min(\xi_1^2,1)]$. Therefore we have
\begin{equation}
\lim_{\delta_t\to0}\frac{1}{\delta_t}\operatorname{Cov}\!\big[\widetilde{\Delta\theta}_t\mid\theta_t\big]
=2s\,I.
\end{equation}
The limiting covariance equals $2sI$ with $s<1$, whereas the Langevin diffusion requires covariance $2I$. Hence full-increment clipping yields an incorrect diffusion limit and breaks convergence to the target distribution even as $\delta_t\to0$.

\subsection{Full Experimental Details: Sentiment Classification}
\label{app:sentiment-classification}

This appendix provides complete experimental details and full results for the sentiment classification experiments presented in Section~\ref{sec:LMM}. We report results for all evaluated learning-rate schedules, training-set sizes, and minibatch sizes, using fixed embeddings extracted from a pretrained OPT-350M model.

\paragraph{Experimental grid.}
For each method (clipped SGLD and SGLRW), we evaluate training set sizes $N \in \{10{,}000, 15{,}000, 20{,}000, 25{,}000\}$ and minibatch sizes $B \in \{8, 16, 32, 64, 128\}$. For each configuration, we perform three independent runs, each consisting of 15 independent chains of length 10,000, discarding the first 5,000 iterations as burn-in.

\paragraph{Learning-rate schedules.}
For the experiments we employ a decaying learning-rate schedule of the form
\begin{align}
\delta_t = s \cdot \eta_0 \, (t+1)^{-0.55},
\end{align}
where $\eta_0 = 5 \times 10^{-5}$ and $s \in \{0.5, 1.0, 3.0\}$ denotes a multiplicative scale factor. We refer to these as the conservative ($0.5\times$), baseline ($1.0\times$), and increased ($3.0\times$) learning-rate scales, respectively.

\paragraph{Metrics.}
Predictive metrics are computed on the held-out test set using posterior predictive probabilities obtained by averaging predicted probabilities across retained MCMC samples. Classification accuracy is computed by thresholding the averaged probability at $0.5$. The negative log-likelihood (NLL) is computed as
\begin{equation}
\mathrm{NLL}
= -\frac{1}{N} \sum_{i=1}^{N}
\bigl[
y_i \log(p_i) + (1 - y_i)\log(1 - p_i)
\bigr],
\end{equation}
where $y_i \in \{0,1\}$ is the true label for test example $i$, $N$ is the number of test examples, and $p_i$ denotes the averaged posterior predictive probability for sample $i$. Expected calibration error (ECE) is computed using $K=10$ equal-width bins over $[0,1]$,
\begin{equation}
\mathrm{ECE}
= \sum_{k=1}^{K} \frac{|B_k|}{N}
\left|
\mathrm{acc}(B_k) - \mathrm{conf}(B_k)
\right|,
\end{equation}
where $\mathrm{acc}(B_k)$ and $\mathrm{conf}(B_k)$ denote the empirical accuracy and mean predicted probability within bin $B_k$. Reported values correspond to means across chains; variability across chains is shown where indicated.

\paragraph{Reading the heatmaps.}
Figures~\ref{fig:acc_heatmaps_350m}, \ref{fig:nll_heatmaps_350m}, and~\ref{fig:ece_heatmaps_350m} report absolute predictive accuracy, negative log-likelihood (NLL), and expected calibration error (ECE) for clipped SGLD and SGLRW across the full experimental grid. These figures complement the relative-improvement summaries shown in the main text and allow inspection of absolute performance across regimes.
\label{app:sentiment-classification_results}
\begin{figure}
    \centering
    \includegraphics[width=\linewidth]{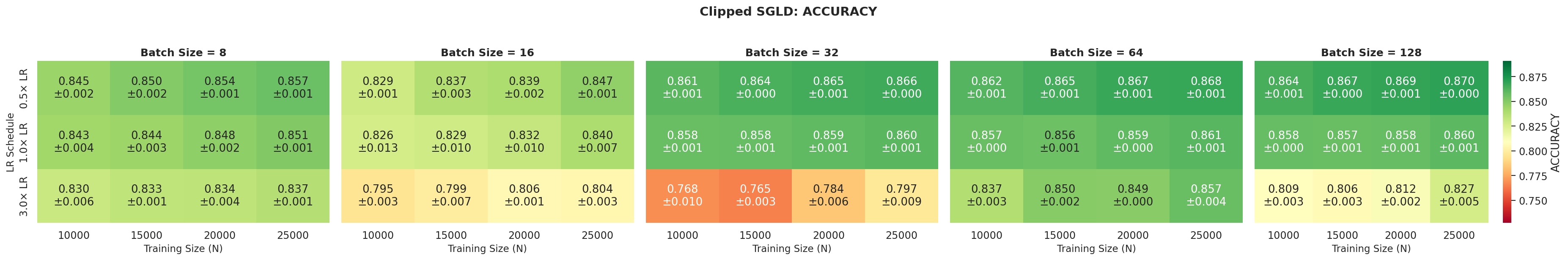}
    \includegraphics[width=\linewidth]{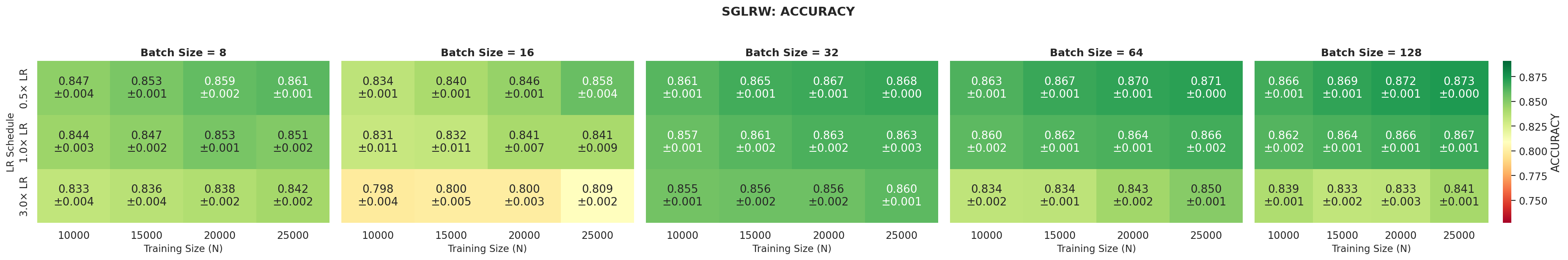}
    \caption{Predictive accuracy heatmaps for sentiment classification using fixed OPT-350M embeddings. 
    \textbf{Top:} Clipped SGLD. 
    \textbf{Bottom:} SGLRW.
    Each panel corresponds to a minibatch size $B$, with columns showing training-set size $N$ and rows indicating learning-rate scale. Values report mean test accuracy across chains, with standard deviation shown below each entry.}
    \label{fig:acc_heatmaps_350m}
\end{figure}

\begin{figure}
    \centering
    \includegraphics[width=\linewidth]{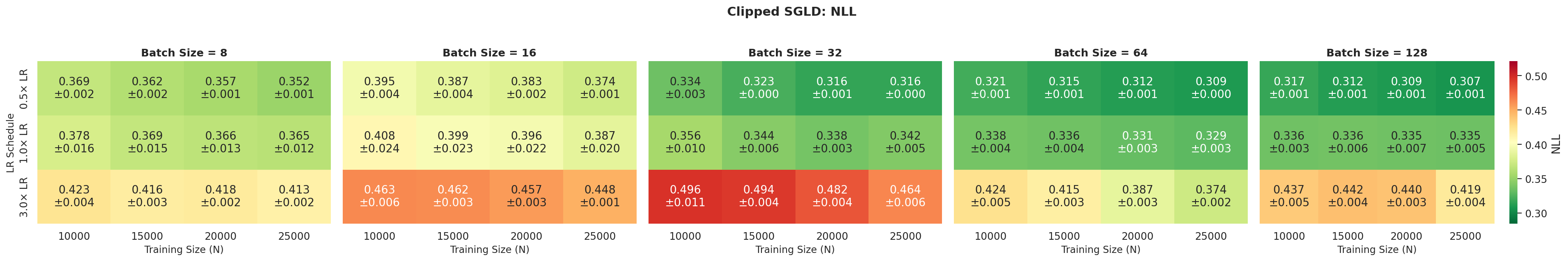}
    \includegraphics[width=\linewidth]{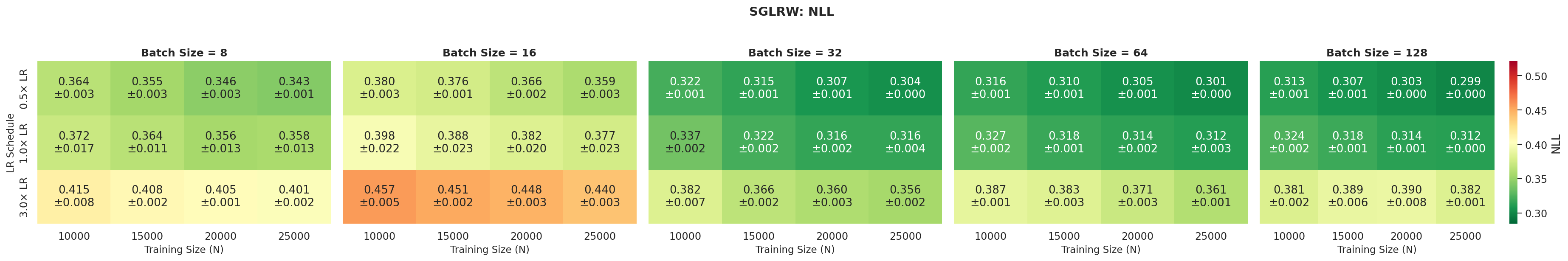}
    \caption{NLL heatmaps for sentiment classification using fixed OPT-350M embeddings. 
    \textbf{Top:} Clipped SGLD. 
    \textbf{Bottom:} SGLRW.
    Each panel corresponds to a minibatch size $B$, with columns showing training-set size $N$ and rows indicating learning-rate scale. Values report mean NLL values across chains, with standard deviation shown below each entry.}
    \label{fig:nll_heatmaps_350m}
\end{figure}

\begin{figure}
    \centering
    \includegraphics[width=\linewidth]{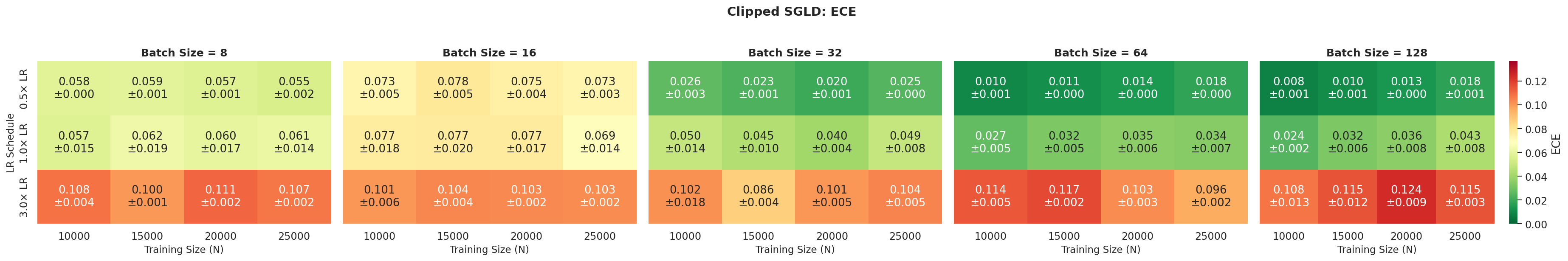}
    \includegraphics[width=\linewidth]{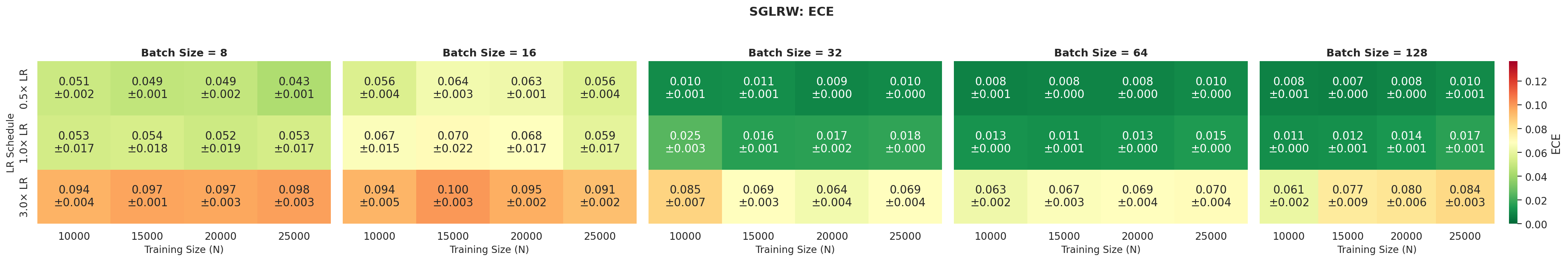}
    \caption{ECE heatmaps for sentiment classification using fixed OPT-350M embeddings. 
    \textbf{Top:} Clipped SGLD. 
    \textbf{Bottom:} SGLRW.
    Each panel corresponds to a minibatch size $B$, with columns showing training-set size $N$ and rows indicating learning-rate scale. Values report mean ECE values across chains, with standard deviation shown below each entry.}
    \label{fig:ece_heatmaps_350m}
\end{figure}

\end{document}